\newcommand{\R}[0]{\mathbf{R}}
\newtheorem{remark}{Remark}[section]
\newtheorem{theorem}{Theorem}[section]
\newtheorem{proposition}{Proposition}
\newtheorem{assumption}{Assumption}
\newtheorem{definition}[theorem]{Definition}
\newtheorem{problem}{Problem}[section]
\newcommand{\ppsout}{\bgroup\markoverwith{\textcolor{MidnightBlue}{\rule[0.5ex]{2pt}{0.4pt}}}\ULon}
\newcommand{\final}[1]{{\color{black} #1}}
\newcommand{\rag}[1]{{\color{magenta} \textrm{RKR: #1}}}
\DeclareMathOperator*{\minimize}{minimize\ }
\DeclareMathOperator*{\maximize}{maximize\ }
\newcommand{\etc}{\emph{etc.}\xspace}
\newcommand{\ie}{\emph{i.e.,}\xspace}
\newcommand{\eg}{\emph{e.g.,}\xspace}
\title{\LARGE \bf
Resilient Monitoring in Heterogeneous Multi-robot Systems \\ through Network Reconfiguration
}
\author{Ragesh K. Ramachandran$^1$, Pietro Pierpaoli$^2$, Magnus Egerstedt$^2$ and Gaurav S. Sukhatme$^1$ 
\thanks{$^1$Department of Computer Science, University of Southern California, Los Angeles, CA 90089, USA  (email: {\tt\small \{rageshku, gaurav\}@usc.edu}).}
\thanks{$^2$School of Electrical and Computer Engineering, Georgia Institute of Technology, Atlanta, GA, 30332 USA (email: {\tt\small \{pietro, magnus\}@gatech.edu}).} 
\thanks{This work was supported in part by the Army Research Laboratory as part of the Distributed and Collaborative Intelligent Systems and Technology (DCIST) Collaborative Research Alliance (CRA).}
}
\begin{document}
\bstctlcite{IEEEexample:BSTcontrol}

\markboth{T-RO SPECIAL ISSUE ON RESILIENCE IN NETWORKED ROBOTIC SYSTEMS}{Ramachandran \MakeLowercase{\textit{et al.}}}

\maketitle
\begin{abstract}
We propose a framework for resilience
in a networked heterogeneous multi-robot team subject to resource failures. Each robot in the team is equipped with resources that it shares with its neighbors, \final{which are identified based on the team's communication graph}.  Additionally, each robot in the team executes a task, whose performance depends on the resources to which it has access. When a resource on a particular robot becomes unavailable (\eg a camera ceases to function), the team optimally 
reconfigures its communication network so that the robots affected by the failure can continue their tasks. 
We focus on 
a monitoring task, where robots individually estimate the state of an exogenous process. We encode the end-to-end effect of a robot's resource loss on the monitoring performance of the team by defining a new stronger notion of observability -- \textit{one-hop observability}. By abstracting the impact that {low-level} individual resources have on the task performance through the notion of one-hop observability, our framework leads to the principled reconfiguration of information flow in the team to effectively replace the lost resource on one robot with information from another, as long as certain conditions are met. 
Network reconfiguration is converted to the problem of selecting edges to be modified in the system's communication graph after a resource failure has occurred. A controller based on finite-time convergence control barrier functions drives each robot to a spatial location that enable the communication links of the modified graph. We validate the effectiveness of our framework by deploying it on a team of differential-drive robots estimating the position of a group of quadrotors.
\end{abstract}

\begin{IEEEkeywords}
Multi-Robot Systems
\end{IEEEkeywords}


\section{Introduction}
\label{sec: intro}

\IEEEPARstart{A}{dvancements} in embedded systems resulting in cheap and reliable computing devices, efficient actuators and sensors, and commodity networking have paved the way toward the development of multi-robot teams conceived for varied tasks such as reconnaissance \cite{Quann17}, environment monitoring \cite{Wang19}, and target tracking \cite{Zhou19}. 

The problem of designing a multi-robot team to achieve a given task is ill-posed. Even with some constraints in place (\eg number of robots, capabilities available on each robot, cost, \etc) many degrees of freedom remain open to the designer. A choice that simplifies the design space considerably is to compose a team of homogeneous robots. Alternatively, a heterogeneous multi-robot team composed of {\it specialized} robots each with a different set of sensing, actuation, communication and computational capabilities, holds the promise of reducing unnecessary redundancies (and therefore, potentially cost). For example, if a team of robots is designed for a construction task, equipping every robot in the team with all the capabilities required for the task is impossible or prohibitively expensive. \final{As noted in ~\cite{stone2000multiagent}, certain tasks can be accomplished by a heterogeneous multi-robot team where their expensive homogeneous counterparts fail.}
\final{At the same time, there are numerous challenges associated with the design of heterogeneous multi-robot systems  } ~\cite{Harvey16,parker2016multiple,Twu14,doi:10.1177/0278364906065378}.
These include but are not restricted to: distributing capabilities among the team members  ~\cite{parker2003effect}, efficiently decomposing the overall task into subtasks and assigning them to the robots in the team according to their capabilities \cite{Prorok16}, developing languages and associated symbols for lucid communication among robots with distinct capabilities \cite{coradeschi2013short}, designing strategies to mitigate the effect of robot resource failures towards team task performance  \cite{ramachandran2019resilience}.



We focus on one question arising in heterogeneous team design  -- {\em how can robots in a heterogeneous team work together by sharing resources, in order to be resilient as a team to failures of individual components?} For example, when a particular sensor on a robot fails (\eg a high resolution camera malfunctions on a robot that flies at high altitude), it may be able to rely on measurements made by a teammate nearby (\eg perhaps one that has a lower resolution camera but flies at a lower altitude).  In the heterogeneous setting, when a fault occurs, appropriate techniques must be in place to reconfigure the underlying interaction network between the robots or redistribute roles so that each robot has access to the resources it needs within the set of robots it can communicate with. Importantly, it is the objective of the sub-task assigned to the robot experiencing the failure that dictates the resource the failed robot needs to access from its neighbors. We note that in a homogeneous setting, any teammate will have the same sensing capabilities as the stricken robot. While this does not make the problem of resource sharing trivial (some robots will have the right vantage point so their data can be meaningfully used by the stricken robot, while others will not) it does simplify matters significantly. 

We formalize the intuitive notion of using a heterogeneous set of robots to construct a resilient team that can reconfigure its communication network to continue performing its overall task. \final{In our formulation, resilience is the ability of robots to perform tasks even after experiencing some failures. This notion is different from robustness which is the ability of systems to reject external disturbances.}
We consider a specific multi-robot task -- collectively monitoring an exogenous process of interest. For example, the team may be tasked with monitoring the motion of targets of interest within a geospatial fence. The exogenous process is the ensemble of target positions each evolving as a function of time. Each robot in the team is equipped with sensors and {the exogenous process is assumed to be collectively observable by the} team. For example, one robot on the team may have a high resolution camera, inertial sensing and a significant amount of local processing allowing it to jointly estimate its pose and the poses of any targets it detects within the prescribed fence with high accuracy. Another robot may have a low resolution camera, inertial sensing and minimal local processing allowing it to jointly estimate its poses and the poses of targets within a smaller area and with lower accuracy. 
{
We think of the problem in the following terms. Each robot in the team is entrusted with the subtask of estimating the state of the process of interest using resources available within its immediate communication neighborhood (\eg a robot can share its sensor data within its communication neighborhood). Our approach is to enforce a strong observability condition on each robot which we call \textit{one-hop observability} ({OHO}).} We say that a robot is one-hop observable if the subsystem consisting of the robot and its {immediate} neighbors is observable. \final{A multi-robot system is referred to as \emph{one-hop observable} if each robot is the system is one-hop observable. This notion of observability is stronger than \emph{collective observability}. In other words, a one-hop observable multi-robot system is also collectively observable, but the converse may not necessarily hold. } In order to guarantee post-failure OHO, we propose an algorithm built on a \textit{mixed integer semi-definite program} (MISDP) which optimizes a team task performance metric based on OHO to generate the new communication graph for team in response to a failure. 
Specifically, our MISDP minimizes the average over the trace of the inverse of the robots’ one-hop observability Gramians.



The overarching idea presented here is to formalize the notion of resilience by exploiting the heterogeneity of the team with regard to the monitoring task executed by the robots in the team. The notion of one-hop observability provides a means to quantify the impact of each sensor in the multi-robot team towards the monitoring task performance. The one-hop observability notion allows the stricken robot to rely on a different sensor on a neighboring robot (or a combination of different sensors) to compensate for the performance loss caused due to the sensor failure. 

The methodology described here is a two-step strategy.  In response to a resource failure, our method first updates the communication network the robots use to share resources, such that the team is guaranteed to attain monitoring task performance commensurate with the available resources. Following this, the robots execute a controller based on finite-time convergence control barrier functions \cite{pierpaoli2019sequential} designed to drive them to a spatial configuration that results in the desired new communication links being formed.

Although in this article (an earlier version appears in~\cite{ramachandran2019resilience}) we restrict our attention to the monitoring task, in the future, we plan to extend the ideas discussed here to other multi-robot tasks where the task performance metric can be expressed as a function of the multi-robot team's resource distribution. This article makes the following contributions: 
\begin{itemize}
    \item We formally pose the problem of resilience in a heterogeneous robot team engaged in monitoring an exogenous process through inter-robot sensor sharing. 
    \item We introduce a new strong notion of observability - \textit{one-hop observability} - and exploit a metric based on this notion to drive network reconfiguration in response to sensor failure.
    \item We propose an algorithm to generate and assemble a communication graph that maximizes the post-failure performance of each robot, by guaranteeing one-hop observability. 
    \item We validate the effectiveness of our proposed framework through multi-robot experiments.
\end{itemize}
\final{In prior work \cite{ramachandran2019resilience}, we proposed a method to maintain high resource availability in a networked heterogeneous multi-robot  system subject to resource failures. In contrast,  the work presented here focuses on a strategy which enables each robot in a team to monitor an exogenous process  through inter-robot sensor sharing. In \cite{ramachandran2019resilience}, we used a probabilistic optimization technique to generate the spatial coordinates for the robots, whereas in this work the robots execute a feedback controller based on finite time convergence control barrier functions to drive them to a spatial configuration that relies the new communication graph in space. Additionally, we relax the constraint that robots need access to a pre-specified list of resource. Finally, we note that the robots in this work are performing a concrete task (monitoring), whereas in our prior work the task was purely notional.}

\section{Related work}
\label{sec: related wrk}

Inspired by the notions of biodiversity~\cite{walker1992biodiversity} and role-specialization~\cite{beshers2001models} typical of biological ecosystems, attempts have been made to formally quantify heterogeneity in robotic systems. For instance, \cite{balch2000hierarchic,Twu14,abbas2014characterizing} explore metrics to measure heterogeneity through the notion of entropy. While, these works consider heterogeneity as a measure of {\it diversity}, our focus is on the impact this diversity has on the {\it performance} of the team in executing a certain task, particularly how to leverage it for resilience. 

The advantage of heterogeneity in multi-robot systems is the possibility of being able to guarantee a desired level of performance by distributing tasks to role-specialized agents, thus avoiding unnecessary overly skilled robots. The sharing of resources between robots makes it possible to create sub-teams of interacting individuals, whose collective capabilities could not be achieved by a single robot~\cite{Parker06,Vig06}. However, the assignment problems (\eg what and with whom a resource must be shared) are combinatorial and computationally challenging. Additionally, the application of standard optimization-based techniques is challenged by the lack of computationally amenable representations for the mapping between robots' resources and the reward attainable from their assignment to a task. In order to alleviate these challenges, it is possible to consider macroscopic models of heterogeneity that describe resources across a team as continuous distributions~\cite{ pinciroli2016buzz,prorok2016fast}. For example, in~\cite{prorok2016adaptive} heterogeneous robots are assigned to tasks by designing a stochastic transition process over the different tasks such that the sum of resources over each task meets a certain demand. A similar formulation for the resource-to-task assignment is considered in \cite{ravichandar2020strata} where different robots are selected from species that accommodate stochastic variations across the resources of their individuals. Alternatively, it is possible to considered a representation of the heterogeneity in a team of robots by introducing {\it role-specializations} that describe the degree of suitability of a robots in executing a set of tasks~\cite{okamoto2008impact,li2003diversity}. For example, in~\cite{notomista2019optimal} a set-valued optimization framework is designed in order to assign heterogeneous robots to tasks, while taking into account their physical constraints and specializations. In contrast with these previous works, here we take a step towards the definition of end-to-end assignment problems (which include resource reconfiguration as special case), designed to match low-level robots' resources (in particular, sensors) with the requirements of the tasks. \final{Note that in this article, we do not  explicitly consider the effect of utility, quality or cost \cite{Gerkey2004,Korsah2013}  of the robots' resources on the overall team task. An investigation along these lines is potential future work.}

\final{In addition, the ability to share resources can be leveraged in multi-robot systems to improve their resilience, as the loss of a resource in one robot can be compensated by the availability of the same resource in the team.}
The problem of resilience in multi-robot systems has been studied primarily in the context of constructing communication networks that preclude malicious robots in the network from exerting an adverse influence on the network ~\cite{zhang2012robustness, Zhang2015}. The work in ~\cite{guerrero2017formations}, focuses on building resilient robot networks based on a notion of resilience called \textit{r-robustness}~\cite{LeBlanc2013}. \final{On the contrary, our work focuses on countering the impact of the robot resource failures on multi-robot team performance.} In the work~\cite{luo2019minimum} a strategy to guarantee connectivity robustness while taking into account the robots' objective is proposed. The line of work presented in \cite{schlotfeldt2018resilient, Zhou19, tzoumas2018resilient} considers task performance optimization by exploiting the submodularity property of the task performance metric to arrive at suboptimal solutions for incorporating resilience in the multi-robot team performing the task. An alternate perspective was considered in~\cite{Wehbe18,Wehbe19}, where the authors formulate  combinatorial optimization problems to maximize the probability of security of a multi-robot system. The work defines the security of a multi-robot system using the control-theoretic concept of left invertiblility and uses a binary decision diagram to compute the probability of security of a multi-robot system. \final{In contrast to these works, we propose a different combinatorial optimization (MISDP) based scheme that reconfigures the communication network of a heterogeneous  multi-robot system in the event of a robot resource failure (e.g. sensor or actuator), enabling the multi-robot system to perform an entrusted task with its available resources.}

Finally, the idea of controlling robots' motion so that certain properties of the underlying communication graph are respected has been studied in different contexts. For example, connectivity-preserving controllers have been designed using edge weight functions~\cite{ji2007distributed}, estimates of graph spectral properties~\cite{sabattini2013distributed}, passivity-based methods ~\cite{igarashi2009passivity}, and control barrier functions~\cite{wang2016multi}. However, depending on the task that the robots have to complete, stronger requirements might be necessary. For instance, motivated by the rich literature on coordinated multi-robot behaviors~\cite{cortes2017coordinated}, the work in~\cite{li2018formally} proposed a formally correct strategy based on control barrier functions to achieve a specific interaction structure required by the particular behavior. The problem of synthesizing multi-robot interaction networks designed to respect a desired distribution of resources is discussed in~\cite{abbas2014characterizing}. In~\cite{ramachandran2019resilience} the idea was extended to the design of a network reconfiguration strategy so to achieve resilience to faults. Here, we exploit the heterogeneity in a multi-robot team with the objective of restoring team performance after occurrence of faults, while optimizing the performance of a monitoring task. This paper fits the line of work on characterization of post-failure capabilities in a robot team. 

\section{Notation and preliminaries}
\label{sec: notations}

We use \final{small or capital letters (with or without subscripts)} to represent scalars. Bold small letters and bold capital letters to denote vectors and matrices, respectively. Calligraphic symbols are used to represent sets. For any positive integer $z \in \mathbb{Z}^+$, $[z]$ denotes the set $\{1,2, \cdots, z\}$. $\|\cdot\|$ denotes the standard Euclidean 2-norm and the induced 2-norm for vectors and matrices respectively. $\|\mathbf{M}\|_F$ denotes the Frobenius norm \final{\cite[chapter 5]{Horn:1985:MA:5509}} of a matrix $\mathbf{M} \in \R^{m_1 \times m_2}$, $\|\mathbf{M}\|_F \triangleq \sqrt{trace(\mathbf{M}^T\mathbf{M})}$. We use $\mathbf{1}^{m_1}$ and $\mathbf{1}^{m_1 \times m_2}$ to represent a vector and matrix of ones of appropriate dimension, respectively. Similarly, $\mathbf{0}^{m_1}$ and $\mathbf{0}^{m_1 \times m_2}$ denote a vector and matrix of zeros respectively. $|\cdot|$ yields the number of elements in a set and length of a vector when applied on a set and a vector, respectively. For any vector $\mathbf{t} \in \R^{m_1}$, $Diag(\mathbf{t})$ denotes a matrix with the elements of $\mathbf{t}$ along its diagonal. Also, $diag(\mathbf{M})$ outputs a vector which contains the diagonal entries of matrix $\mathbf{M}$ as its elements. $[\mathbf{M}]_{i,j}$ denotes the $i,j$ entry of $\mathbf{M}$. Furthermore, $\mathbf{M}_1 \oplus \mathbf{M}_2$ gives the block diagonal matrix containing the matrices. $\mathcal{S}^m_+$ denotes the space of $m \times m$ symmetric positive semi-definite matrices. Alternatively, $\mathbf{M} \succeq 0$ also indicate the $\mathbf{M}$ is positive semi-definite. Note that we use $\mathbf{I}_n$ to represent the $n\times n$ identity matrix.
A weighted undirected graph $\mathcal{G}$ is defined by the triplet $(\mathcal{V},\mathcal{E} \subseteq \mathcal{V} \times \mathcal{V},\mathbf{A} \in \{0,1\}^{|\mathcal{V}| \times |\mathcal{V}|})$,  where $\mathbf{A}$  is the unweighted adjacency matrix \final{\cite[chapter 2]{Mesbahi_Magnus}} of the graph. Also, $\overline{\mathcal{E}} = (\mathcal{V} \times \mathcal{V}) \setminus \mathcal{E}$ denotes the edge complement of $\mathcal{G}$. The \textit{graph Laplacian matrix} \final{\cite[chapter 2]{Mesbahi_Magnus}} $\mathbf{L}$ of $\mathcal{E}$ can be computed as
\begin{align}
	\label{eqn:laplacian defin}
	\mathbf{L} = Diag(\mathbf{A}\cdot \mathbf{1}^{|\mathcal{V}|}) - \mathbf{A}.
\end{align}
	We summarize some properties of a graph Laplacian matrix~\cite{GodsilRoyle2001} used in this article:
\begin{align}
	\label{eqn:lap property 1}
	\mathbf{L} \cdot \mathbf{1}^{|\mathcal{V}|} &= \mathbf{0}^{|\mathcal{V}|} \\
	\label{eqn:lap property 2}
	Trace(\mathbf{L}) &= \sum_{1\leq i < j \leq |\mathcal{V}|} [\mathbf{A}]_{i,j}.
\end{align}
Note that the above relationships are true for the weighted adjacency matrix also. If $\lambda_c(\mathcal{G})$ denotes the algebraic connectivity of the $\mathcal{G}$ {(\eg the second smallest eigenvalues of $L$)}, then $\mathcal{G}$ is connected if and only if
\begin{align}
    \label{eqn:property cnnct}
   \lambda_c(\mathcal{G}) > 0.
\end{align}
{Finally, given two graphs $\mathcal{G}_1(\mathcal{V},\mathcal{E}_1)$ and $\mathcal{G}_2(\mathcal{V},\mathcal{E}_2)$ defined on the same set of vertices, we say that $\mathcal{G}_1$ spans $\mathcal{G}_2$, or $\mathcal{G}_1 \subseteq \mathcal{G}_2$ in short, if $\mathcal{E}_1 \subseteq \mathcal{E}_2$.} \final{The Laplacian associated with every connected graph has only one zero eigenvalue with  $\mathbf{1}$ as its corresponding eigenvector. In addition, it can be shown that an undirected graph is connected if and only if $\mathbf{L} + \frac{1}{n} \mathbf{1} \mathbf{1}^T$ is a positive-definite matrix~\cite[Proposition 1]{sundin2017connectedness}.} 

\section{Problem formulation}
\label{sec: problem}

Consider a team of $N$ robots in $d$-dimensional space, whose spatial configuration is denoted with ${\bf x}(t) \in \mathbb{R}^{dN}$, where ${\bf  x}_i \in \mathbb{R}^d$ is the position of robot $i$. \final{ Given a configuration ${\bf x}(t)$, the interaction structure between robots is described by a $\Delta$-disk graph $\mathcal{G}({\bf x}) = \mathcal{G}(\mathcal{V},\mathcal{E}({\bf x}))$
\begin{equation} \label{eq:proximitygraph}
    \mathcal{E}({\bf x}) = \{ (i,j) \in \mathcal{V} \times \mathcal{V} \, | \, \| {\bf x}_i(t) - {\bf x}_j(t) \| \leq \Delta \in \mathbf{R}^+ \}
\end{equation}
where $\mathcal{V}$ is the node set representing the robots and $\mathcal{E}({\bf x})$ is the edge set induced by the robots' configuration ${\bf x}(t)$. The $\Delta$-disk graph is also referred as $\Delta$-\textit{interaction graph} or simply \textit{interaction graph} for brevity. The \textit{communication graph} of the robot team is a subgraph of its interaction graph. } If $\mathcal{N}(i)$ denotes the set of neighbors of robot $i$ according to the graph $\mathcal{G}({
\bf x})$, then we define $\mathcal{\Bar{N}}(i) \triangleq \mathcal{N}(i) \cup i$ and refer to it as the closed neighbors of robot $i$. Similarly, we define closed adjacency matrix $\mathbf{\Bar{A}} \triangleq \mathbf{{A}} + \mathbf{I}$, where $\mathbf{{A}}$ is the unweighted adjacency matrix associated with $\mathcal{G}$. We let the motion of robot $i$ be described by the control-affine dynamics 
\begin{equation} \label{eq:affinecontrol}
\dot{\mathbf{x}}_i = \mathbf{f}_i(\mathbf{x}_i) + \mathbf{g}_i(\mathbf{x}_i)\,\mathbf{u}_i, 
\end{equation}
where $\mathbf{f}_i$ and $\mathbf{g}_i$ are Lipschitz continuous vector fields that describe the autonomous and input-dependent dynamics respectively and $\mathbf{u}_i \in \mathcal{U}_i$ is the input to robot $i$. 

We assume that there are $r$ different resources in the multi-robot team. The set $[r]$ contains labels of the resources available within the heterogeneous team. As in~\cite{ramachandran2019resilience,abbas2016deploying},  we define a binary matrix $\{0,1\}^{n \times r}$,  which we refer to as the \textit{resource matrix} and denote by $\boldsymbol{\Gamma}$. The entries of $\boldsymbol{\Gamma}$ are assigned as follows, 
	
\begin{align}
	\label{eqn:resource matrix}
	[\boldsymbol{\Gamma}]_{i,j} =  
	\begin{cases}
	1 & ~ \text{robot } i \text{ has resource}~j  \in [r] \\
	0 & \text{otherwise.}
	\end{cases}
\end{align}

Note that, the tuple $(\mathcal{G}, \mathbf{x}, \boldsymbol{\Gamma})$ uniquely determines both  spatial and network representation of the heterogeneous multi-robot system.  Hence, we term the tuple $(\mathcal{G}, \mathbf{x}, \boldsymbol{\Gamma})$ as a \textit{configuration} of the heterogeneous multi-robot team and denote it by $\mathcal{C}$. We refer to a resource matrix $\boldsymbol{\Gamma}$ as a \textit{feasible resource matrix}, if {there exists a spatial configuration such that} the multi-robot team is capable of performing the required task using the corresponding resource distribution. The resource matrix is termed \emph{infeasible} if it is not feasible.  {Moreover, $\mathcal{\Bar{C}}$ denotes the space of all configurations which contain a feasible resource matrix. Let each robot in the multi-robot team be assigned a subtask and the subtask performance of robot $i$ be quantified using the metric $J_i(\mathcal{C},T)$. 
Consequently, we define 
$J(J_1(\mathcal{C},T), J_2(\mathcal{C},T), \dots, J_N(\mathcal{C},T), T)$ or $J(J_1, J_2, \dots, J_N,\mathcal{C}, T)$
as the task performance metric associated with multi-robot team task, where $T$ 
is the time required by multi-robot team to execute the task. We show in \autoref{sec: opt reconfig} that, the monitoring task considered in the paper can be {represented by} a performance metric having the structure mentioned above.} 

We model a random \textit{resource failure} by a map that consumes a resource matrix (excluding the $n \times r$ matrix of zeros) and generates a resource matrix by setting a random nonzero entry of the consumed resource matrix to zero. A \textit{tolerable resource failure} maps a feasible resource matrix to another feasible resource matrix, while a \textit{catastrophic resource failure} maps a feasible resource matrix to an infeasible resource matrix. If the multi-robot team is capable of performing its task in the face of a tolerable resource failure without changing its configuration then the failure is said to be a \textit{benign resource failure}. We assume that the robots in the team use techniques existing in literature on multi-robot fault detection for  resource failure identification~\cite{pierpaoli2018fault,Ghasemi2017}.

Consider a sequence of countably infinite resource failures $\mathcal{F} = [\xi_1, \xi_2, \cdots \xi_{\infty}]$ acting on a feasible resource matrix $\boldsymbol{\Gamma}$ sequentially. Now, let $\mathcal{F}_{n_f}=[\xi_1, \xi_2, \cdots, \xi_k, \cdots, \xi_{n_f}]$ be the first $n_f \in \mathbb{Z}^+$ tolerable resource failures in $\mathcal{F}$, such that $\xi_{n_f + 1} \in \mathcal{F}$ is a catastrophic failure. The $k^{th}$ tolerable resource failure in $\mathcal{F}_{n_f}$ is denoted as $\xi_k$. We denote $\boldsymbol{\Gamma}_r[k]$ as the resultant resource matrix after first $k$ tolerable resource failures in $\mathcal{F}_{n_f}$ acted on $\boldsymbol{\Gamma}$. Mathematically, $\boldsymbol{\Gamma}[k] = \xi_k(\xi_{k-1}(\xi_{k-2}(\cdots \xi_1(\boldsymbol{\Gamma}))))$. Also, $\boldsymbol{\Gamma}[k]$ can be recursively defined as $\boldsymbol{\Gamma}[k] = \xi_k(\boldsymbol{\Gamma}[k-1]),\ \boldsymbol{\Gamma}[0] = \boldsymbol{\Gamma}$. We term $\mathcal{C}[k-1]$ as the configuration of the heterogeneous multi-robot team before $\xi_k$ occurred. Note that, since in this paper, the robots are entrusted with a monitoring task, hereon we use the terms ``resource'' and ``sensor'' interchangeably in this manuscript.

We are now in a position to formally define the two main problems addressed in this paper:
	
\begin{problem}
		\label{prob: comm graph generation}
		\textbf{Task optimal communication graph generation:} Given a tolerable resource failure $\xi_k$, an associated feasible resource matrix $\boldsymbol{\Gamma}[k]$ and a heterogeneous multi-robot team configuration $\mathcal{C}[k-1]$ find a new communication graph $\mathcal{G}_c[k]$ such that,
		\begin{enumerate}
			\item $\mathcal{G}_c[k]$ is a connected graph,
			\item the multi-robot {monitoring} task performance {metric $J(J_1, J_2, \dots, J_N,\mathcal{C}, T)$} is optimized, and
			\item 
			\final{\textbf{near topology constraint}: the absolute value of the difference in communication links between the new communication graph $\mathcal{G}_c[k]$ and the number of communication links in $\Delta$-disk interaction graph  $\mathcal{G}[k-1]$ is less than a pre-defined value $\epsilon$.}
		\end{enumerate}
\end{problem}

\final{The condition of graph connectivity in \hyperref[prob: comm graph generation]{Problem~\ref{prob: comm graph generation}} is enforced to ensure cooperative task performance of the team. The next condition guarantees post-failure optimal task performance of the team for the available resources.  Since in many applications it is desirable to have communication networks with as few links as possible, the final condition keeps a check on the  number of additional communication links added to generated graph. }
	
\begin{problem}
		\label{prob: reconfiguration execution}
		\textbf{Reconfiguration Execution:} Given a heterogeneous multi-robot team communication graph $\mathcal{G}_c[k]$, generate feedback control laws that drives the robots to physical locations such that, their associated interaction graph $\mathcal{G}({\bf x})$ spans  $\mathcal{G}_c[k]$. We describe this problem more precisely in~\autoref{sec: config assemble}.
\end{problem}

When a robot team experience a resource failure, information about the failure is transmitted to a base station. The base station checks if the resource failure is tolerable or catastrophic. If catastrophic, it commands the robots to return to their base station as the team no longer retains the necessary resource to perform the assigned task. If the resource failure is tolerable, the base station uses the algorithm developed to solve the optimization problem  \final{described in the upcoming Section}, to generate a new communication graph that optimizes the task performance of the team. Once a new communication graph is created, the base station broadcasts the current and desired configurations to the robots. This information is used by a local \textit{finite-time convergence barrier certificates} feedback controller that drives each robot to a position which realizes a $\Delta$-disk interaction graph that spans the desired communication graph. \autoref{fig:schematic} gives a schematic illustration of our multi-robot $\Delta$-disk interaction graph reconfiguration strategy in the event of a tolerable resource failure. The following sections describes our solution procedure for these two problems. 

\begin{figure*}[!t]
		\centering
		\vspace{2mm}
		\hspace{3mm}
		\includegraphics[width=\linewidth]{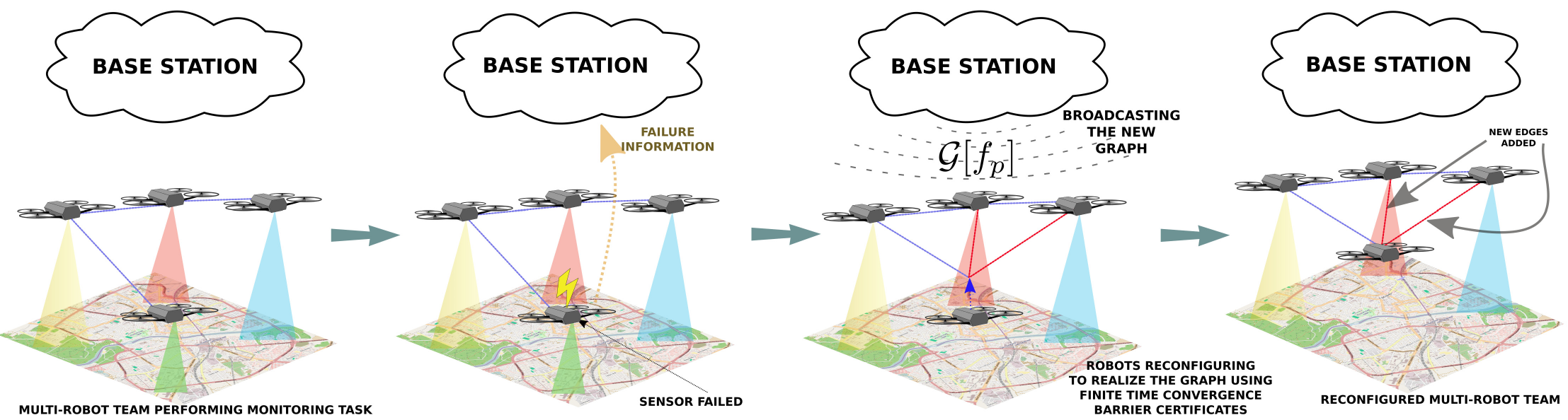}
		\caption{Basic strategy of our approach. When a resource is lost, the base station selects edges to modify the communication graph. Next, the robots use \textit{finite-time convergence barrier certificates} based feedback control laws to drive themselves to positions which realize a $\Delta$-interaction graph that spans the desired communication graph. }
		\label{fig:schematic}       
	\end{figure*}

\section{Task Performance based Communication graph generation}
\label{sec: opt reconfig}


In this section, we outline our proposed solution to \autoref{prob: comm graph generation} which is to solve the following optimization problem.

\begin{align}
    \label{eqn:abs:obj}
    \maximize_{\mathcal{C} \in \mathcal{\Bar{C}}} \quad &  J(J_1, J_2, \cdots, J_N,\mathcal{C}, T)\\
    \label{eqn:abs:grph cnnt}
    \text{subject to} ~~ &\lambda_c(\mathcal{G}) > 0\\
    \label{eqn:abs:nr top}
    ~~& |\,|\mathcal{E}|-|\mathcal{E}[k-1]|\,| \leq \epsilon, \, \epsilon > 0,
\end{align}
{This abstract optimization problem is a formal mathematical representation of 
\autoref{prob: comm graph generation}.} 
Next, we describe the details of the monitoring task and the systematic way through which the abstract optimization is extended into a method to handle robot resource failures in the multi-robot team performing the task. 

\subsection{Monitoring task} \label{subsec:monitoringTask}

We now formally define the monitoring task executed by the multi-robot team.  The objective for the robots is to individually estimate the state of an exogenous linear process with state $\mathbf{{e}} \in \mathbb{R}^{n_e}$
\begin{align} \label{eq:processDynamics}
    \mathbf{\Dot{e}} = \mathbf{A}_e\,\mathbf{{e}} + \mathbf{B}_e\,\mathbf{u}_e + {\bf w}_e, 
\end{align}
where $\mathbf{A}_e$ and $\mathbf{B}_e$ are the process and control matrices of the exogenous process respectively, while ${\bf w}_e$ is the zero-mean independent Gaussian noise process. Every robot in the team observes the exogenous process of interest and obtains measurements according to the linear measurement model.
\begin{align}
     \mathbf{y}_i = \mathbf{H}_i\,\mathbf{{e}} +  \boldsymbol{\nu}_i \quad i=1,\dots,N,
\end{align}
where $\mathbf{H}_i$ is the measurement matrix for robot $i$ and while $\boldsymbol{\nu}_i$ is the zero-mean independent Gaussian noise corresponding to the set of measurements of robot $i$. Additionally it is assumed that, the pair $(\mathbf{A}_e,\mathbf{H}), \mathbf{H} = [\mathbf{H}_1^\top\, \mathbf{H}_2^\top\, \cdots\, \mathbf{H}_N^\top]^\top$ is observable. In other words, the multi-robot team is collectively observable. Note that, the individual pairs $(\mathbf{A}_e,\mathbf{H}_i), \forall i=1,\dots,N$ may not be observable. 

In the context of a monitoring task, the resource matrix $\boldsymbol{\Gamma}$ is interpreted as a sensor matrix. Since the sensors are assumed to be linear (linearized for non-linear sensors), we can associate a one dimensional output matrix with each sensor and $\mathcal{H} = \{\mathbf{h}_1, \mathbf{h}_2, \cdots, \mathbf{h}_i, \cdots, \mathbf{h}_r \}, \mathbf{h}_i \in \mathbf{R}^{n_e}$ denote  the set containing all possible sensor-wise output matrices. Next, we show that given a sensor matrix $\boldsymbol{\Gamma}$ and the set $\mathcal{H}$ the output matrix $\mathbf{H}_i$ of robot $i$ can be constructed. If $\mathcal{Y}(i) = \{\gamma^1(i), \gamma^1(i), \cdots, \gamma^{|\mathcal{Y}(i)|}(i) \}$ denote the set of column indices of the non-zero entries of $\boldsymbol{\Gamma}$'s $i^{\text{th}}$ row, then the measurement matrix $\mathbf{H}_i$ can be constructed as $\mathbf{H}_i = [\mathbf{h}_{\gamma^1(i)} \mathbf{h}_{\gamma^2(i)} \cdots \mathbf{h}_{\gamma^{|\mathcal{Y}(i)|}(i)}]^\top$. Since the robots monitor a process we use an observability-based metric to quantify the team task performance. 

\final{ For ease of readability we explicitly state the assumptions made in this problem formulation:
 \begin{assumption}
 For ease of presentation all robots are assumed to estimate the same process. It is straightforward to verify that this assumption does not impact the generality of our formulation.
 \end{assumption}
 \begin{assumption}
 No two sensor failures occur simultaneously. 
 \end{assumption}
 \begin{assumption}
 The sensors in the multi-robot system are linear.
 \end{assumption}
 \begin{assumption}
 The multi-robot team is  collectively observable.
 \end{assumption}
 \begin{assumption}
 It is assumed that each robot in the team observes a continuous-time process. Note that the framework extends to the case of a discrete-time process as well. 
 \end{assumption}
 \begin{assumption}
 It is also assumed that subtask allocation is centralized, and the subtasks are assigned before the multi-robot monitoring task execution.
 \end{assumption}
}

\subsection{One-hop observability}
\label{subsec: OHO}


In the context of this paper, we consider each robot in the team executing an individual monitoring task using its locally available measurements. In addition, although one could design distributed estimation schemes (\eg \cite{olfati2007dkf}), the asymptotic convergence of the consensus protocol makes the distributed estimation strategies ill-suited for state estimation of agile processes~\cite{FB-LNS} in large robotic networks. 
\final{ Hence in this paper we focus on robots performing individual monitoring task where they rely on their sensors or their immediate neighbors' sensors to estimate the state of the process of interest.} {Consequently, we need a formal way to quantify the state estimation efficacy of a robot to estimate the state of a process using at most the measurements from its own and its immediate neighbors.}

To tackle this problem, we define a new notion of observability which we term as \textit{one-hop observability} ({OHO}). 

\begin{definition}[One-hop observable robot]
A robot is said to be \textbf{one-hop observable} if the subsystem containing the robot and its one-hop neighbors is observable. Formally, if $\mathbf{H}_i$ is the output matrix associated with robot $i$ then the robot is one-hop observable if and only if the pair $(\mathbf{A}_e, \mathbf{H}_{{\mathcal{\Bar{N}}(i)}})$ is observable, where $\mathbf{H}_{{\mathcal{\Bar{N}}(i)}} = \left[\mathbf{H}_{i^1}^\top, \mathbf{H}_{i^2}^\top,\cdots, \mathbf{H}_{i^{|{\mathcal{\Bar{N}}(i)}|}}^\top \right]^\top$, ${\mathcal{\Bar{N}}(i)} = \{i^1, i^2, \cdots, i^{|{\mathcal{\Bar{N}}(i)}|}\}$.
\end{definition}

We define the robot team to be one-hop observable if each robot in the team is one-hop observable. An immediate ramification of the team being OHO is the fact that, each robot can estimate the state of the monitored process using information from its one-hop neighbors, without the aid of a consensus protocol.   Therefore, 
in order to guarantee that each robot can execute its task, upon the occurrence of a sensor fault, we need to improve the one-hop observability of the team.

\begin{remark}
One-hop observability is a stronger condition than collective observability. The idea can be expanded to the notion of $k$-hop observability, in which the subsystem consisting of the robot and all  robots within its $k$-hop neighborhood results in an observable system. 
Collective observability is a special case of $k$-hop observability where $k$ equals the diameter \cite{GodsilRoyle2001} of the graph.
\end{remark}

We term the quantity $\mathbf{\Bar{A}} \boldsymbol{\Gamma}$ as the \textit{one-hop sensor matrix} since it stores information about the sensors accessible to a robot within its one-hop neighborhood. In the event of a robot sensor failure, we aim to reconfigure the multi-robot team $\Delta$-disk interaction graph such that every robot in the team is one-hop observable, \final{while adding as few edges as possible to the robots' interaction graph.} Using standard theorems from linear system theory~\cite{Chen1998}, we infer that a robot is one-hop observable if and only if the observability matrix  associated with the pair $(\mathbf{A}_e, \mathbf{H}_{{\mathcal{\Bar{N}}(i)}})$ is full column rank~\cite{Chen1998}. 
Since, in general, rank-constrained optimization problems are  classified as NP-hard, due to the discontinuous and non-convex nature of the rank function~\cite{SUN2017128}, developing exact rank-constraint optimization strategies to render a one-hop observable multi-robot team  is difficult. 

Towards this end, we aim to achieve this goal by minimizing the average over the trace of the inverse of the robots' one-hop observability Gramians. Metrics such as trace, determinant and condition number of observability Gramians and their inverses have been extensively  used solving sensor selection problems (see~\cite{hinson2014observability,ilkturk2015observability,Dilip2019} and the references therein). \final{The observability Gramian of a system quantifies its ability to estimate its state through its outputs. Each eigenvalue of the observability Gramian measures the output energy of a particular observable mode associated with the system. Increase in the output energy of an observable mode implies increase in its ease of observability. Hence, the observability Gramian's eigenvalues encode the relative ease of observing the corresponding observable modes.  The minimum eigenvalue of the observability Gramian quantifies the output energy associated with the least observable mode, while the maximum eigenvalue measures the output energy for the most observable mode \cite{hinson2014observability}. Since the trace of the observability Gramian is equivalent to the sum of its eigenvalues, the trace  measures total output energy over all observable modes of the system. Therefore the ease of observability of the system can be maximized by maximizing the trace of observability Gramian or minimizing the trace of the inverse of observability Gramian \cite{hinson2014observability}. } {Here, the subtask of each robot in the team is to estimate the state of the process. Its performance can be measured using the trace of the inverse of its one-hop observability Gramian; lowering the trace of the inverse one-hop observability Gramian increases the subtask performance.  Consequently, the multi-robot task performance is measured as the average of the trace of the inverse of the Gramians.}

Let $\boldsymbol{\Theta}_i(0,T)$ be the $T$ time continuous observability Gramian associated with sensor $i$, defined as: 

\begin{align}
    \label{eqn: snsr obs gram}
    \boldsymbol{\Theta}_i(0,T) = \int^{T}_{0} \exp{(\mathbf{A}_e^\top \tau)}\mathbf{h}_i \mathbf{h}_i^\top \exp{(\mathbf{A}_e \tau)} d \tau.
\end{align}
If,
\begin{align}
    \boldsymbol{\Theta}(0,T) = \int^{T}_{0} \exp{(\mathbf{A}_e^\top \tau)}\mathbf{H}^\top \mathbf{H} \exp{(\mathbf{A}_e \tau)} d \tau,
\end{align}
where $\mathbf{H} = [\mathbf{h}_1 \mathbf{h}_2 \cdots \mathbf{h}_r]^\top$
then,
\begin{align}
    \boldsymbol{\Theta}(0,T) &= \int^{T}_{0} \exp{(\mathbf{A}_e^\top \tau)}(\sum^r_{i=1} \mathbf{h}_i \mathbf{h}_i^\top) \exp{(\mathbf{A}_e \tau)} d \tau \\
    &= \sum^r_{i=1} \int^{T}_{0} \exp{(\mathbf{A}_e^\top \tau)}\mathbf{h}_i \mathbf{h}_i^\top \exp{(\mathbf{A}_e \tau)} d \tau \\
    &= \sum^r_{i=1} \boldsymbol{\Theta}_i(0,T).
\end{align}

It is straightforward to derive that the one-hop observability Gramian of robot $i$ can be computed as 
\begin{align}
    \label{eqn: one hop obser gram cnstrct}
    \mathbf{O}_i(0,T) = \sum^r_{j=1}[\mathbf{\Bar{A}}\,\boldsymbol{\Gamma}]_{i,j} \boldsymbol{\Theta}_j(0,T).
\end{align}

Recall that $\mathbf{\Bar{A}}$ is the unweighted closed adjacency matrix associated with the communication graph of the robot team. The above equation can be compactly written for the whole multi-robot team as,
\begin{align}
\label{eqn: one hop kron}
    (\mathbf{\Bar{A}}\,\boldsymbol{\Gamma})\otimes \mathbf{I} \begin{bmatrix} \boldsymbol{\Theta}_1(0,T) \\
	\boldsymbol{\Theta}_2(0,T) \\
	\vdots \\
	\boldsymbol{\Theta}_r(0,T)\end{bmatrix} = \begin{bmatrix} \mathbf{O}_1(0,T) \\
	\mathbf{O}_2(0,T) \\
	\vdots \\
	\mathbf{O}_N(0,T)\end{bmatrix}
\end{align}
 where $\mathbf{O}_1(0,T), \mathbf{O}_2(0,T), \cdots, \mathbf{O}_n(0,T)$ matches the one-hop observability Gramians of the corresponding robot in the multi-robot team.

\subsection{Algorithmic solution}

\autoref{algo: comm_grph} presents the  pseudocode for handling robot resource failure in a team performing a monitoring task. Our algorithmic solution relies on a mixed integer semi-definite program (MISDP). Researchers have extensively used MISDPs to formulate and solve graph construction problems~\cite{rafiee2010optimal,Grob2011}.
In essence, the MISDP is a reformulation of the abstract optimization problem (\autoref{eqn:abs:obj}-\autoref{eqn:abs:nr top}) tailored to the requirements of the monitoring task. We now mathematically state the MISDP; in the following discussion we argue that it captures the aspects of \autoref{prob: comm graph generation} in the context of the monitoring task.

\begin{align}
	\label{eqn: MISDP obj}
	\minimize_{\substack{\mathbf{L} \in \mathcal{S}^n_+,\ \mu \in \mathbf{R}_{> 0}, \\ \boldsymbol{\Pi} \in \{0,1\}^{n\times n} \boldsymbol{\Delta},\mathbf{\Bar{O}} \in  \mathcal{S}_+}} \quad & \frac{\text{Trace}(\boldsymbol{\Delta})}{N} \\
	\label{eqn:cons:zero sum}
	\text{subject to} ~~ & \mathbf{L}\,\mathbf{1}^N = \mathbf{0}^N\\
	\label{eqn:cons:connectivity}
	~~ &\frac{1}{N} \mathbf{1} \mathbf{1}^\top  + \mathbf{L} \succeq \mu \mathbf{I} \\
	\label{eqn:cons:Schur}
	~~ & \begin{bmatrix} \boldsymbol{\Delta} & \mathbf{I} \\
	\mathbf{I} & \mathbf{\Bar{O}}
	\end{bmatrix} \succeq \mathbf{0} \\
	\label{eqn:cons:one hop observability}
	~~ & (\boldsymbol{\Pi}\cdot\boldsymbol{\Gamma}[k])\otimes \mathbf{I}\begin{bmatrix} \boldsymbol{\Theta}_1(0,T) \\
	\boldsymbol{\Theta}_2(0,T) \\
	\vdots \\
	\boldsymbol{\Theta}_r(0,T)\end{bmatrix} = \begin{bmatrix} \mathbf{O}_1 \\
	\mathbf{O}_2 \\
	\vdots \\
	\mathbf{O}_N\end{bmatrix} \\
	\label{eqn:cons:diag binary}
	~~ & diag(\boldsymbol{\Pi}) = \mathbf{1}^n \\
	\label{eqn:cons:binary sym}
	~~ & \boldsymbol{\Pi} = \boldsymbol{\Pi}^T\\
	\label{eqn:cons:Lap diag}
	~~ & [\mathbf{L}]_{i,i} > 0 ~\forall\ i \in [N]\\
	\label{eqn:cons:Lap off diag min}
	~~ & [\mathbf{L}]_{i,j} =  -\boldsymbol{\Pi}_{i,j} ~\forall ~(i, j) \in [N]^2,~i \neq j\\
	\label{eqn:cons:topology near}
	~~ & \|\boldsymbol{\Pi} - \mathbf{\Bar{A}}[k-1] \|_F^2 \leq 2\times e,
\end{align}
where $\mathbf{\Bar{O}} = \mathbf{O}_1 \oplus \mathbf{O}_2 \cdots \oplus \mathbf{O}_n$. \final{The decision variable $\mathbf{L}$ is a weighted Laplacian of a graph that has the same topology as $\boldsymbol{\Pi}$ and $\Bar{\mathbf{A}}[k]$. \hyperref[eqn:cons:Lap off diag min]{Constraint~\ref{eqn:cons:Lap off diag min}} guarantees that the off-diagonals of the Laplacian is negative if and only if the corresponding elements in $\boldsymbol{\Pi}$ is unity. \hyperref[eqn:cons:zero sum]{Constraint~\ref{eqn:cons:zero sum}} encodes the property of a graph Laplacian that each row sums to zero (\autoref{eqn:lap property 1}). 
Recall from \autoref{sec: notations} that, \hyperref[eqn:cons:connectivity]{Constraint~\ref{eqn:cons:connectivity}} ensures that the generated graph is connected. \hyperref[eqn:cons:diag binary]{Constraint~\ref{eqn:cons:diag binary}} and \hyperref[eqn:cons:binary sym]{Constraint~\ref{eqn:cons:binary sym}}  models $\boldsymbol{\Pi}$ as the closed adjacency matrix of a graph. \hyperref[eqn:cons:one hop observability]{Constraint~\ref{eqn:cons:one hop observability}} is a direct result of \autoref{eqn: one hop kron}. Through a straightforward application of \textit{Schur complement}\cite{boyd1994linear} on \hyperref[eqn:cons:Schur]{Constraint~\ref{eqn:cons:Schur}}, we can show that ${\text{Trace}(\boldsymbol{\Delta})}/{N} \geq \frac{1}{N}\sum_{i=1}^N\text{Trace}(\mathbf{O}_i^{-1})$.} Hence, minimizing \autoref{eqn: MISDP obj} minimizes the average of trace of the inverse of the robots' one-hop observability Gramians. \final{The final constraint depicted in  \autoref{eqn:cons:topology near} enforces the near topology constraint (third condition) delineated in \autoref{prob: comm graph generation}. The parameter $e \in \mathbb{Z}^+$ controls the maximum number of additional edges that can be added to the new communication graph.}


\let\oldnl\nl
\newcommand{\nonl}{\renewcommand{\nl}{\let\nl\oldnl}}

\begin{algorithm}[t]
    \caption{Communication graph generation }
    \label{algo: comm_grph}
    \SetKwProg{Fn}{Function}{}{}
    \nonl \textbf{Input:} {$\mathbf{\Bar{A}}[k-1]$: Closed adjacency matrix before failure} \\ 
    \nonl \textbf{Input:} {$\boldsymbol{\Gamma}[k]$: Sensor matrix after sensor failure} \\ 
    \nonl \textbf{Input:} {$\mathbf{A}_e$: State transition matrix} \\
    \nonl \textbf{Input:} {$\mathcal{H}$: Sensor output matrix set} \\
    \nonl \textbf{Output:} {$\mathbf{\Bar{A}}[k]$: Closed adjacency matrix which is OHO} \\
    \Fn{Comm\_graph\_gen($\mathbf{\Bar{A}}[k-1]$, $\boldsymbol{\Gamma}[k]$, $\mathbf{A}_e$, $\mathcal{H}$)}{
    \tcp{check if the configuration is collectively observable}
    \If{$(\mathbf{A}_e, \mathbf{H})$ is not observable}{\tcp{return if it is not}\Return ``Infeasible''}
    \tcp{check if the configuration is still OHO after sensor failure }
    \If{is\_team\_OHO($\mathbf{\Bar{A}}[k-1]$, $\boldsymbol{\Gamma}[k]$, $\mathbf{A}_e$, $\mathcal{H}$)}{\tcp{no change is communication graph required}$\mathbf{\Bar{A}}[k] \leftarrow \mathbf{\Bar{A}}[k-1]$\;
    \Return $\mathbf{\Bar{A}}[k]$\;}
    
    \While{True}{
    Solve MISDP in \autoref{eqn: MISDP obj} - \autoref{eqn:cons:topology near} with $e=1$\;
    \tcp{set $\mathbf{\Bar{A}}[k-1]$ to the solution from MISDP}
    $\mathbf{\Bar{A}}[k-1] \leftarrow \boldsymbol{\Pi}$ \;
    \tcp{check if the new communication graph makes the team OHO }
    \If{is\_team\_OHO($\mathbf{\Bar{A}}[k-1]$, $\boldsymbol{\Gamma}[k]$, $\mathbf{A}_e$, $\mathcal{H}$)}{\tcp{solution found}$\mathbf{\Bar{A}}[k] \leftarrow \mathbf{\Bar{A}}[k-1]$\;
    \Return $\mathbf{\Bar{A}}[k]$\;}
    }
   
    }
    \Fn{is\_team\_OHO($\mathbf{\Bar{A}}$, $\boldsymbol{\Gamma}$, $\mathbf{A}_e$, $\mathcal{H}$)}{
    \tcp{flag to check if the team is OHO }
    flag $\leftarrow $ True\;
    \For{$i\leftarrow 1$ \KwTo $N$}{
    \tcp{check if each robot is OHO}
    \If{$(\mathbf{A}_e, \mathbf{H}_{{\mathcal{\Bar{N}}(i)}})$ is not observable}{
    flag $\leftarrow$ False\;
    break\;}
    }
    \Return flag\;
    }
\end{algorithm}

The base station employs the function \textit{Comm\_graph\_gen} in \autoref{algo: comm_grph} to generate the desired configuration. When the function receives a configuration as an input it first checks if the corresponding multi-robot is collectively observable (Line 2). If the system is not collectively observable, the function is terminated as the problem is infeasible. Next, in Line 4 the algorithm checks if the configuration is currently OHO and returns the inputted configuration if the condition is satisfied. On the contrary, if the inputted configuration fails to be OHO, the function computes the new configuration using the instructions in the loop detailed in Line 7 - Line 12. During each iteration of the loop, a new configuration is generated by solving the MISDP (\autoref{eqn: MISDP obj}-\autoref{eqn:cons:topology near}) with $e=1$ and inputted configuration as parameters and, checks if the generated configuration is OHO (Line 10). If the generated configuration is OHO then it is returned as the solution. Otherwise, the loop reiterates to solve the MISDP with $e=1$ and the generated configuration as the parameters. The following proposition shows that the function \textit{Comm\_graph\_gen} in \autoref{algo: comm_grph} is complete. 

\begin{proposition}
The function \textit{Comm\_graph\_gen} in \autoref{algo: comm_grph} is guaranteed to output a communication graph that ensures that each robot in the multi-robot team is OHO if the  configuration passed as input is collectively observable.
\end{proposition}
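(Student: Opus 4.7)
The plan is to prove completeness of \textit{Comm\_graph\_gen} via a termination-and-correctness argument for its while loop, resting on two ideas: an existence result for an OHO target configuration, and a structural analysis of the MISDP forcing monotone progress toward it.

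First, I would establish that under the collective-observability hypothesis the complete graph $K_N$ is always an OHO feasible configuration. Indeed, in $K_N$ every robot's closed neighborhood equals the full vertex set, so $\mathbf{H}_{\bar{\mathcal{N}}(i)}$ is a row permutation of $\mathbf{H}$ for each $i$; since $(\mathbf{A}_e, \mathbf{H})$ is observable by assumption, so is each $(\mathbf{A}_e, \mathbf{H}_{\bar{\mathcal{N}}(i)})$, making $K_N$ OHO. This places a finite upper bound of $\binom{N}{2}$ on the number of edges one must add between $\bar{\mathbf{A}}[k-1]$ and an OHO graph.

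Second, I would analyze a single iteration of the loop. The Frobenius constraint \autoref{eqn:cons:topology near} with $e=1$, together with the symmetry and binary entries of $\boldsymbol{\Pi}$ and $\bar{\mathbf{A}}[k-1]$, restricts $\boldsymbol{\Pi}$ to differ from the current closed adjacency matrix in at most one edge. By the generalized Schur complement applied to \autoref{eqn:cons:Schur}, $\bar{\mathbf{O}}$ must be positive definite, hence each $\mathbf{O}_i$ invertible, which is equivalent to robot $i$ being OHO. Therefore any feasible MISDP solution is already OHO, so the check \textit{is\_team\_OHO} on line 10 succeeds and the algorithm returns. A companion monotonicity observation is that, by the linearity in \autoref{eqn: one hop obser gram cnstrct}, adding an edge can only increase each $\mathbf{O}_i$ in the PSD order, so the MISDP objective (which upper-bounds $\frac{1}{N}\sum_i \mathrm{trace}(\mathbf{O}_i^{-1})$) strictly prefers adding over removing edges.

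Third, for termination I would combine the two points: each iteration either returns an OHO graph or strictly enlarges the edge set by one edge; since there are at most $\binom{N}{2}$ edges and $K_N$ is OHO by the base case, the loop terminates in a bounded number of iterations at an OHO configuration. The main obstacle to completing the proof is that, if no single-edge perturbation of the current graph is OHO, the MISDP could become strictly infeasible and stall the loop. I would address this either by inductively constructing a one-edge augmentation along a path $\bar{\mathbf{A}}[k-1] \to K_N$ that preserves MISDP feasibility under a mild regularization of \autoref{eqn:cons:Schur} (so that intermediate non-OHO graphs are still returned), or by permitting the algorithm to slightly relax $e$ in response to infeasibility — neither modification affects the $\binom{N}{2}$ termination bound derived from the base case.
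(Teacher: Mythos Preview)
Your core termination idea---that $K_N$ is OHO whenever the team is collectively observable, giving a finite target the loop can reach---is exactly the paper's argument. The paper's proof is essentially just that observation plus a one-sentence claim that ``in the worst case, this loop iterates through all possible connected graphs with $N$ vertices,'' so the complete graph is eventually hit.

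Where you go further is the structural analysis of the MISDP, and your Schur-complement observation is both correct and sharper than anything in the paper: with the identity off-diagonal block, \autoref{eqn:cons:Schur} forces $\bar{\mathbf{O}}\succ 0$, so every feasible $\boldsymbol{\Pi}$ is already OHO and the loop would return on the first feasible iteration. But this same observation creates the obstacle you flag: if the current graph is not OHO and no single-edge perturbation is OHO either, the MISDP with $e=1$ is infeasible and the loop stalls. The paper's proof does not engage with this at all; its ``iterates through all connected graphs'' line is not justified under the $e=1$ Frobenius constraint, and your monotone edge-addition argument likewise presupposes the MISDP returns something.

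Your proposed remedies---regularizing \autoref{eqn:cons:Schur} so non-OHO intermediates are feasible, or adaptively relaxing $e$---are reasonable, but they modify the algorithm rather than prove the proposition for \textit{Comm\_graph\_gen} as written. So the gap you identify is real, and it is the same gap the paper's own proof leaves open; you have simply been more careful in exposing it. If you want a proof that matches the paper's level of rigor, the $K_N$ existence argument alone suffices; if you want an actually complete proof, one of your modifications (or an explicit infeasibility-handling branch in the pseudocode) is needed.
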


\begin{proof}
We prove the proposition by considering the two different types of configuration inputs that can be passed to the algorithm. \\
\textbf{Case 1: Collectively unobservable configuration.} If the input configuration is collectively unobservable then a OHO configuration cannot be generated as collectively observability is a weaker notion than OHO. Hence, the algorithm checks the collective observability condition at Line 2 and return that the problem is infeasible. \\
\textbf{Case 2: Collectively observable configuration.} If the input configuration is collectively observable then at Line 4, the algorithm checks if the configuration is still OHO after the failure. If this condition is satisfied then the inputted configuration is OHO and the algorithm terminates after returning the inputted configuration as the solution. On the contrary, if the inputted configuration fails to satisfy the OHO test in Line 4 then, the function searches for a solution using the loop in Line 7. In the worst case, this loop iterates through all possible connected graphs with $N$ vertices. Since the full connected graph is OHO if the associated configuration is collectively observable, it is a solution to the problem and thus the loop is guaranteed to terminate after a finite number of iterations.
\end{proof}

It is noteworthy that, since it is proven that integer programming problems are NP-Hard \cite{Bernhard_book_12}, \autoref{algo: comm_grph} may not have a polynomial time solution since it involves solving MISDPs. 


\section{Configuration Assembling}
\label{sec: config assemble}
From the solution to the problem discussed in the previous section, we obtain a desired communication graph $\mathcal{G}_c[k] = \mathcal{G}(\mathcal{V},\mathcal{E}_c[k])$ which guarantees the post-failure ability for the robots to complete the monitoring task. In this section we introduce a feedback controller that drives the robots to a spatial configuration, such that, under the definition of $\Delta$-disk graph in~\autoref{eq:proximitygraph}, the edges of $\mathcal{G}_c[k]$ can be established. In other words, we want the following constraint to be satisfied
\begin{equation} \label{eq:graphSpanning_objective}
    \mathcal{G}(\mathbf{x}(t)) \subseteq \mathcal{G}_c[k] \quad t<\infty.
\end{equation}
From the definition in~\autoref{eq:proximitygraph}, we note that finite-time convergence is required, as asymptotic convergence is not sufficient to guarantee all desired edges of $\mathcal{G}_c[k]$ will be established.

Following the statement of~\autoref{prob: reconfiguration execution}, the objective is to define a feedback controller $\mathcal{U}_i: \mathbb{R}^{dN} \mapsto U_i$ for all $i=1,\dots,N$, such that $\|\mathbf{x}_i(t) - \mathbf{x}_j(t)\| \leq \Delta$ for all $(i,j) \in \mathcal{E}_c[k]$, for some $t < \infty$. We solve this problem by considering the technique proposed in~\cite{pierpaoli2019sequential}, which we adapt to our setting. 

In particular, we define the following output function
\begin{equation}
    h_{ij}(x) = \Delta^2 - \| \mathbf{x}_i - \mathbf{x}_j \|^2 
\end{equation}
and we note that the constraint in~\autoref{eq:graphSpanning_objective} is satisfied if and only if $h_{ij}(x) \geq 0$, for all $(i,j) \in \mathcal{E}_c[k]$. Alternatively, constraint~\autoref{eq:graphSpanning_objective} is satisfied if and only if $\mathbf{x}(t) \in \mathcal{D}$, where
\begin{equation}
    \mathcal{D} = \{ \mathbf{x} \in \mathbb{R}^{dN} \,|\, h_{ij}(x) \geq 0, \, \forall (i,j)\in  \mathcal{E}_c[k] \}.
\end{equation}
In order to achieve finite-time convergence to $\mathcal{D}$, we introduce the following class-$\mathcal{K}$ function
\begin{equation}
	\bar{\alpha}_{\rho,\gamma}(h_{ij}) = \gamma \cdot \,\text{sign}(h_{ij})\, \cdot |h_{ij}|^\rho,	\label{eq:alpha}
\end{equation}
with $\rho \in [0,1)$ and $\gamma>0$, which is continuous everywhere and locally Lipschitz everywhere except at the origin~\cite{bhat2000finite}. From the definition of $\bar{\alpha}_{\rho,\gamma}(h(x))$ it is possible to construct a set of control inputs for robots $i$ and $j$ that guarantees an edge between them will be assembled in finite time
\begin{equation} \label{eq:admissibleInputs}
K_{ij}^{i} = \{u_i\in  {U}_i \,| \,L_{f_i}\,h_{ij} + L_{g_i}\,h_{ij}\,u_i + \frac{\bar{\alpha} _{\rho,\gamma}(h_{ij})}{2} \geq 0 \},
\end{equation}
where $L_{a} b$ denotes the Lie derivative of $b$ with respect to the field $a$. Extending the edge-wise set in~\autoref{eq:admissibleInputs} to all edges in the desired graph $\mathcal{G}_c[k]$, we obtain a team-wise set of control inputs that satisfy~\autoref{eq:graphSpanning_objective} defined as
\begin{equation} \label{eq:allAdmissible}
    K^{i} = \bigcap_{j\in\mathcal{N}_{i,c}} K_{ij}^{i} \quad \forall i=1,\dots,N.
\end{equation}
Thus, as stated by the following theorem, which we report for completeness, by selecting control inputs from the set~\autoref{eq:allAdmissible}, robots will reach the spatial configuration that satisfies constraint~\autoref{eq:graphSpanning_objective} in finite time.

\begin{theorem} \cite{pierpaoli2019sequential}
\label{thm:fcbfControl_dist}
	Denoting with $\mathbf{x}_0 = [\mathbf{x}_{0,1}^\top,\dots,\mathbf{x}_{0,N}^\top]^\top$ the initial state of a multi-robot system with dynamics~\autoref{eq:affinecontrol}, any controller $\mathcal{U}_i$ such that
	\begin{equation}
	    \mathcal{U}_i(\mathbf{x}_{0}) \in K^{i} \quad \forall \,\mathbf{x}_{0} \in \mathbb{R}^{d|\mathcal{\Bar{N}}_{i,c}|},
	\end{equation} 
	will drive the ensemble state to $\mathcal{D} $ within finite time.
\end{theorem}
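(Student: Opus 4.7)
The plan is to prove the theorem edge-by-edge and then take the maximum convergence time over the finitely many edges in $\mathcal{E}_c[k]$. First I would fix an arbitrary edge $(i,j)\in\mathcal{E}_c[k]$ and compute the total derivative of the output $h_{ij}(x)=\Delta^2-\|x_i-x_j\|^2$ along trajectories of the ensemble dynamics~\autoref{eq:affinecontrol}. Because $h_{ij}$ depends only on $x_i$ and $x_j$, its derivative splits cleanly as $\dot h_{ij}=L_{f_i}h_{ij}+L_{g_i}h_{ij}\,u_i+L_{f_j}h_{ij}+L_{g_j}h_{ij}\,u_j$. This exposes why the barrier inequality in~\autoref{eq:admissibleInputs} carries the factor $1/2$: each endpoint robot is responsible for supplying half of the required decrease of $-\bar\alpha_{\rho,\gamma}(h_{ij})$.

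Next, I would use the fact that the hypothesis $\mathcal{U}_i(x_0)\in K^i$ for every $i$, together with~\autoref{eq:allAdmissible}, implies that for every edge $(i,j)\in\mathcal{E}_c[k]$ the input of $i$ lies in $K_{ij}^i$ and the input of $j$ lies in $K_{ij}^j$. Adding the two scalar inequalities defining these two sets yields
\begin{equation*}
\dot h_{ij}(x(t))+\bar\alpha_{\rho,\gamma}\bigl(h_{ij}(x(t))\bigr)\;\geq\;0,
\end{equation*}
which is precisely the finite-time control barrier condition along any closed-loop trajectory. Using $\text{sign}(h)$ in the definition of $\bar\alpha_{\rho,\gamma}$ is what makes the inequality drive $h_{ij}$ upward whenever it is negative and only gently downward when it is positive, so the half-space $\{h_{ij}\geq 0\}$ is both attractive and forward invariant.

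The finite-time convergence conclusion now reduces to a scalar comparison argument. Setting $V_{ij}=\max\{0,-h_{ij}\}$, the barrier inequality gives $\dot V_{ij}\leq -\gamma\,V_{ij}^{\rho}$ whenever $V_{ij}>0$, and the classical finite-time stability lemma of Bhat and Bernstein~\cite{bhat2000finite} (with $\rho\in[0,1)$) guarantees that $V_{ij}$ reaches zero no later than $T_{ij}=V_{ij}(x_0)^{1-\rho}/(\gamma(1-\rho))$. Since $\mathcal{E}_c[k]$ is finite, $T^\star=\max_{(i,j)\in\mathcal{E}_c[k]} T_{ij}<\infty$, at which time $x(t)\in\mathcal{D}$, giving the desired finite-time entry into $\mathcal{D}$.

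The main obstacle, and the point where I would need to be careful, is that $\bar\alpha_{\rho,\gamma}$ is continuous but fails to be locally Lipschitz at the origin, so the usual Picard/Carathéodory existence-and-uniqueness arguments do not apply to the closed-loop differential equation on $\{h_{ij}=0\}$. I would handle this either by invoking the non-Lipschitz finite-time stability framework of~\cite{bhat2000finite} directly (which permits non-uniqueness but still guarantees that every solution reaches the target set in finite time), or by restricting attention to one-sided (Filippov/Krasovskii) solutions and noting that once $h_{ij}\geq 0$ the half-space is invariant under the barrier inequality. A secondary bookkeeping issue is to verify that $K^i$ is nonempty under the assumed actuation model, which follows from the control-affine form of~\autoref{eq:affinecontrol} and the fact that an admissible controller is only required to exist pointwise, not to be continuous.
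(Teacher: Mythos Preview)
The paper does not actually prove this theorem: it is quoted from~\cite{pierpaoli2019sequential} and stated ``for completeness,'' with the text moving immediately afterward to the choice of a particular controller via the quadratic program~\eqref{eq:minQP2}. So there is no in-paper proof to compare against; any comparison is necessarily to the original argument in~\cite{pierpaoli2019sequential}.

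That said, your reconstruction is the standard (and correct) route. The key observations---that $h_{ij}$ depends only on $x_i$ and $x_j$, that the factor $1/2$ in~\eqref{eq:admissibleInputs} is precisely what lets you add the two endpoint inequalities to recover $\dot h_{ij}+\bar\alpha_{\rho,\gamma}(h_{ij})\geq 0$, and that the resulting scalar differential inequality yields finite-time convergence of $\max\{0,-h_{ij}\}$ via the Bhat--Bernstein lemma~\cite{bhat2000finite}---are exactly the ingredients one expects in a proof of a finite-time control barrier function result, and they match the machinery the paper invokes around~\eqref{eq:alpha}. Taking the maximum of the finitely many edge-wise hitting times to conclude $x(t)\in\mathcal{D}$ is the right closing step. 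Your caveats about non-Lipschitzness of $\bar\alpha_{\rho,\gamma}$ at the origin and about nonemptiness of $K^i$ are also well placed; the paper itself defers the latter feasibility question to~\cite{pierpaoli2019sequential}.
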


Among the many possible strategies to select control inputs from~\autoref{eq:allAdmissible}, we guarantee the least control effort by constructing the following feedback controller
\begin{equation} \label{eq:minQP2}
\begin{aligned}
&\qquad \mathbf{u}_i^* = \arg \min_{\mathbf{u}_i \in U_i}  \| \mathbf{u}_i \|^2 &\\
&L_{\mathbf{f}_i}\,h_{ij} + L_{\mathbf{g}_i}\,h_{ij}\,\mathbf{u}_i + \frac{\bar{\alpha}_{\rho,\gamma}(h_{ij})}{2} \geq 0, & \forall j\in \mathcal{N}_{i,c}
\end{aligned}
\end{equation}
In summary, provided that a solution to the quadratic program~\autoref{eq:minQP2} exists for all $i=1,\dots,N$, robots will assemble $\mathcal{G}_c[k]$ in finite time. We refer the reader to~\cite{pierpaoli2019sequential} for a discussion on the feasibility of this problem.

\section{Application}
\label{sec: application}

We demonstrate the implementation of the reconfiguration process proposed in this paper by considering a network of ground heterogeneous mobile sensors estimating the position of a group of quadrotors executing a coordinated motion (\ie the exogenous process). Denoting with $N_d=5$ the number of quadrotors, the collective state of the process is ${\bf e} = [{\bf e}_1^\top,\dots, {\bf e}_{N_d}^\top]^\top$, where ${\bf e}_i \in \mathbb{R}^3$ is the position of the $i^{\text{th}}$ quadrotor. \final{The experiment design choice of letting the ground robots be the team of mobile sensors undergoing the reconfiguration process was driven by the fact that it is easier to visualize their reconfiguration.}

Assuming the rigid-body dynamics of the quadrotors being faster than the dynamics of controller they execute, for the purpose of estimation, we assume their velocity with respect to the inertial frame can be directly controlled. We let the horizontal motion of the quadrotors be governed by a coordinated controller representing a {\it Cyclic-Pursuit} behavior, while the vertical motion is governed by a {\it Leader-Follower} behavior:
\begin{equation} \label{eq:follower}
\begin{array}{l} \begin{aligned}
    \begin{bmatrix}
    \dot{e}_{i,x} \\ \dot{e}_{i,y} 
    \end{bmatrix} &= R(\theta)\begin{bmatrix}
    e_{i-1,x} - e_{i,x} \\ e_{i-1,y} - e_{i,y}  
    \end{bmatrix} \\
    \dot{e}_{i,z}  &= e_{i-1,z} - e_{i,z} \end{aligned}
\end{array}
\quad \forall \, i=2,\dots,N_d
\end{equation}
where $R(\theta) \in SO(2)$ is planar rotation matrix and $\theta \in [0,2\pi)$ depends on the number of quadrotors. In this case, we place the quadrotors on a circle of radius $0.85\,\text{m}$ and we compute $\theta = \frac{\pi}{N_d}=0.6283$~\cite{ramirez2010distributed}. Similarly, the leader's velocity is:
\begin{equation} \label{eq:leader}
\begin{array}{l} \begin{aligned}
    \begin{bmatrix}
    \dot{e}_{1,x} \\ \dot{e}_{1,y} 
    \end{bmatrix} &= R(\theta)\begin{bmatrix}
    e_{N_d,x} - e_{1,x} \\ e_{N_d,y} - e_{1,y}  
    \end{bmatrix} \\
    \dot{e}_{1,z}  &= u_e(t) \end{aligned}
\end{array}
\end{equation}
where $u_e: \mathbb{R}_+ \mapsto \mathbb{R}$ is the input to the leader (a sinusoidal function in the experiment). The motion described by \autoref{eq:follower} and \autoref{eq:leader} can be compactly represented as in \autoref{eq:processDynamics}:
\begin{equation}
    \mathbf{\Dot{e}} = \mathbf{A}_e\,\mathbf{{e}} + \mathbf{B}_e\,{u}_e(t) + {\bf w}_e,
\end{equation}
where $\mathbf{A}_e$ and  $\mathbf{B}_e$ can be derived from inspection of~\autoref{eq:follower} and~\autoref{eq:leader}. 

In order to execute the monitoring task, the ground robots can execute $5$ different types of measurement, such that each measurement is a linear combination of the components of the state of a single quadrotor. For this, we define the following reduced measurement matrices 
\begin{align} \label{eq:measurementModels}
\begin{aligned}
    \mathbf{\Hat{h}}_1 &= [1,\frac{1}{2},0] \qquad &\mathbf{\Hat{h}}_2 &=[0,1,0]  \qquad  &\mathbf{\Hat{h}}_3 &= [0,0,1] \\
    \mathbf{\Hat{h}}_4 &= [0,\frac{1}{2},1] \qquad &\mathbf{\Hat{h}}_5 &= [\frac{1}{2},0,1]. \qquad & &
    \end{aligned}
\end{align}
Considering that the same type of measurement is repeated for each quadrotor, we match the formulation discussed in~\autoref{subsec:monitoringTask} by defining the suite of sensors available to the robots in term of the set of resources $[r] = \{r_1,\dots,r_5\}$, where
\begin{align} \label{eq:resourceApplications}
    \mathbf{r}_k = \begin{bmatrix}
    \mathbf{h}_{k\,N_d - N_d - 1} \\ \vdots \\ \mathbf{h}_{k\,N_d}
    \end{bmatrix} = \mathbf{I}_{N_d} \otimes \mathbf{\Hat{h}}_k,
\end{align}
for $k=1,\dots,5$.
 In summary, given a subset of the total resources $[r] = \{r_1,\dots,r_5\}$, \autoref{eq:resourceApplications} defines the complete set of measurement available to the robots.

\subsection{Results}
We implement the scenario described above using a team of $9$ differential drive robots from the Robotarium~\cite{wilson2020robotarium} and a team of $5$ quadrotors Crazyflie 2.1 nano-copters controlled through the Crazyswarm~\cite{preiss2017crazyswarm} library. \final{Since there is no direct communication between ground and aerial vehicles}, the transfer of information between the Robotarium and the Crazyswarm environments is handled through ROS nodes. The measurements corresponding to the different sensors described in \autoref{eq:measurementModels} are \final{obtained through simulated sensors} using data from the optical tracking system. Finally, the estimate of the quadrotors' position are obtained through iterations of a standard Kalman filter \final{and the continuous time formulation described above are implemented assuming a fixed time step of 0.033 seconds.}
\begin{figure}[ht]
    \centering
    \includegraphics[width=\columnwidth]{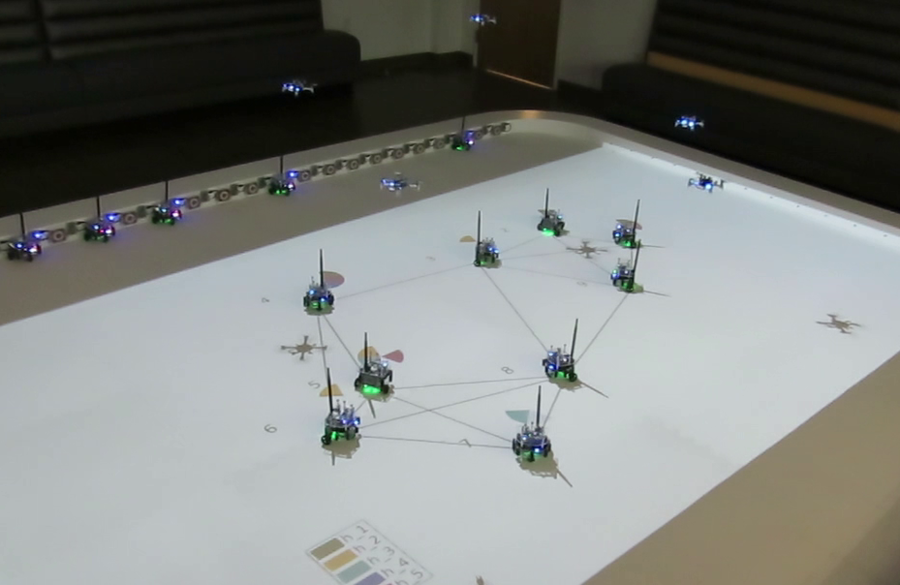}
    \caption{Screenshot during execution of the experiment. A team of $5$ quadrotors executing a coordinated motion represents the exogenous process, that the ground robots are task with monitoring.}
    \label{fig:GazeboDrones}
\end{figure}

With reference to \autoref{fig:5RobotsExp}, we consider a monitoring team with the initial resources distribution represented \autoref{fig:5RobotsExp}(a), subject to the failure sequence described in \autoref{tab:failures} (for clarity, the images in \autoref{fig:5RobotsExp} were obtained from an experiment executed without the quadrotors for the sole purpose of visualizing the motion of the robots during all reconfigurations). 
The pie charts on top of each robot denote which of the five possible sensors each robot is equipped with. At iteration $150$, failure $\xi_1$ occurs, and resource $\boldsymbol{\Gamma}_{7,4}$ is removed (\ie resource $4$ from robot $7$). We observe that, although robot $7$ itself preserves its one-hop observability, robot $8$ loses OHO. Importantly, by \final{measuring the ability of each robot to execute its ask as opposed to simply monitoring resources, robots subject to the loss of a sensor that are still capable of executing their task (e.g., robot $7$ in this case) are not subject to a graph reconfiguration.} From the solution of~\autoref{eqn: MISDP obj}, the additional edge $(5,8)$ is added to the network (red edge in~\autoref{fig:5RobotsExp}(b)), and assembled by the robots (\autoref{fig:5RobotsExp}(c)). The same process is repeated for the removal $\boldsymbol{\Gamma}_{2,3}$ (\autoref{fig:5RobotsExp}(d)) which results in the reconfiguration in \autoref{fig:5RobotsExp}(e). Finally, \autoref{fig:5RobotsExp}(f) depicts the robots configuration after occurrence of all failures.

\begin{figure*}[t]
\centerline{ 
\subcaptionbox{\label{fig2:a}}{\includegraphics[ width=0.66\columnwidth]{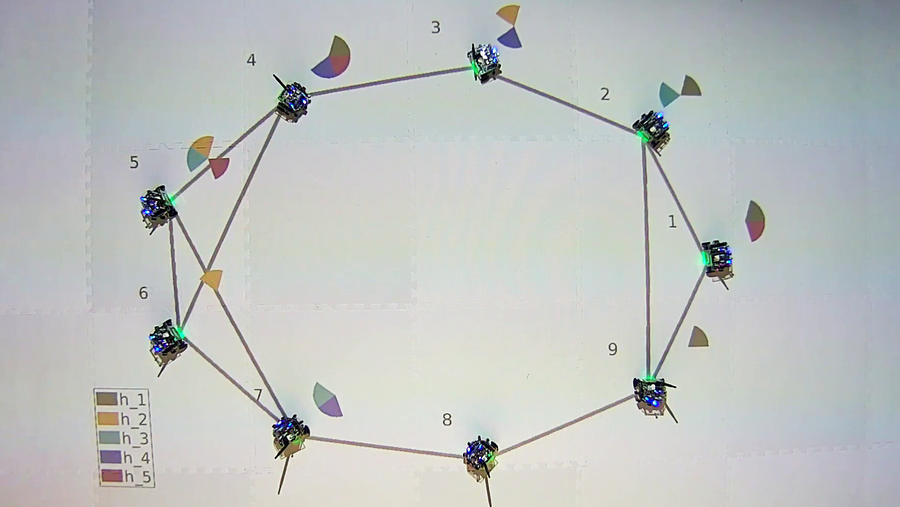}}~
\subcaptionbox{\label{fig2:b}}{\includegraphics[ width=0.66\columnwidth]{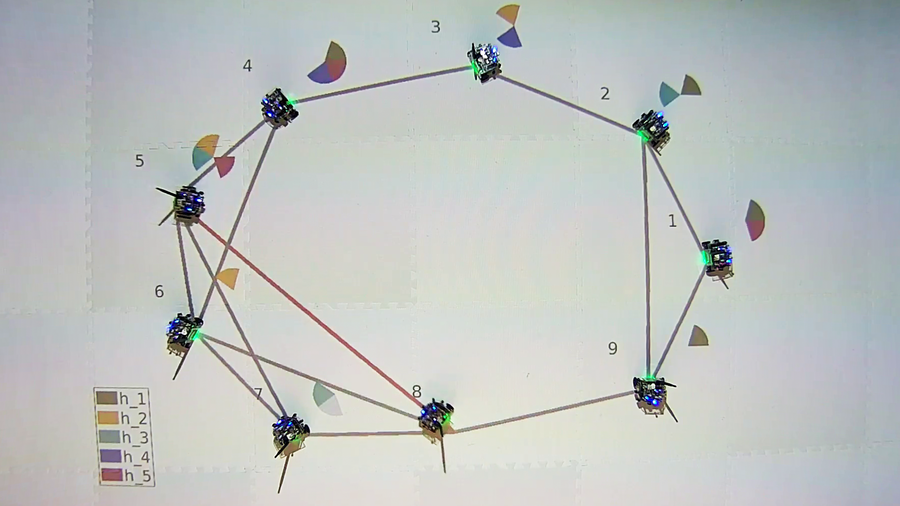}}~
\subcaptionbox{\label{fig2:c}}{\includegraphics[ width=0.66\columnwidth]{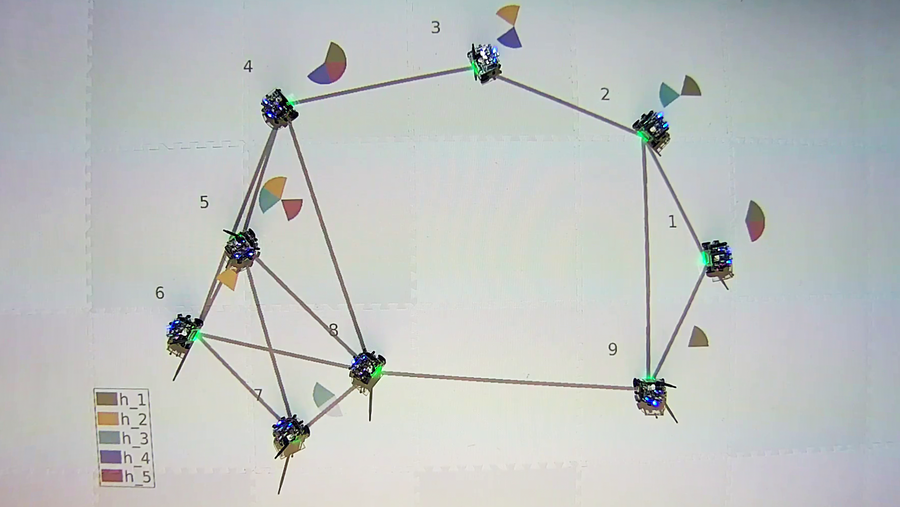}}
} 
\vspace{0.25cm}
\centerline{ 
\subcaptionbox{\label{fig2:d}}{\includegraphics[ width=0.66\columnwidth]{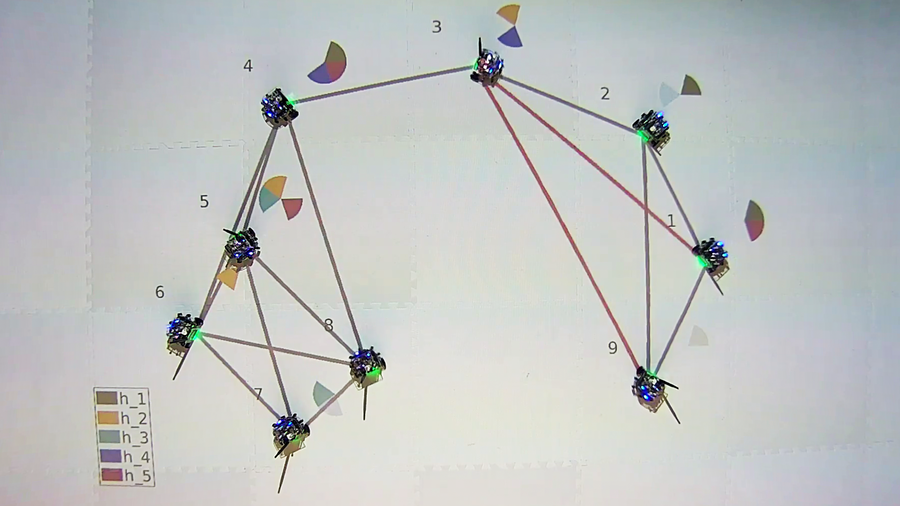}}~
\subcaptionbox{\label{fig2:e}}{\includegraphics[ width=0.66\columnwidth]{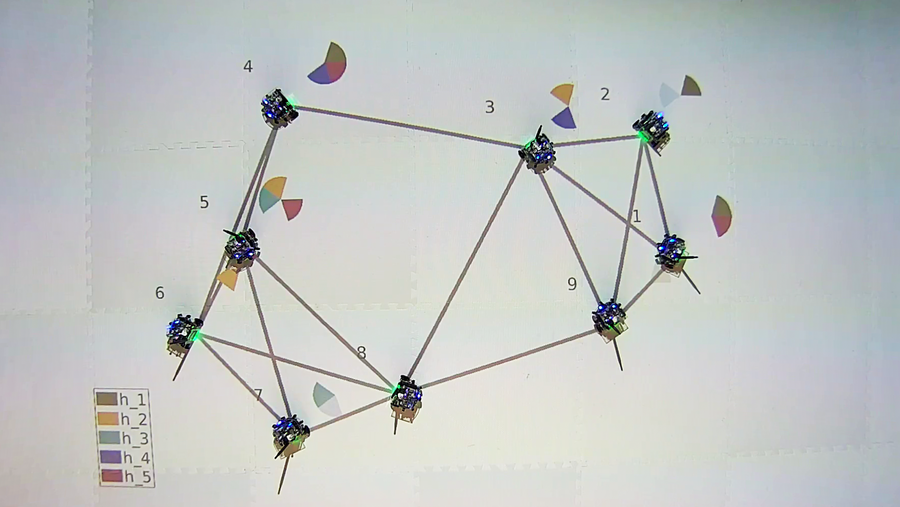}}~
\subcaptionbox{\label{fig2:f}}{\includegraphics[ width=0.66\columnwidth]{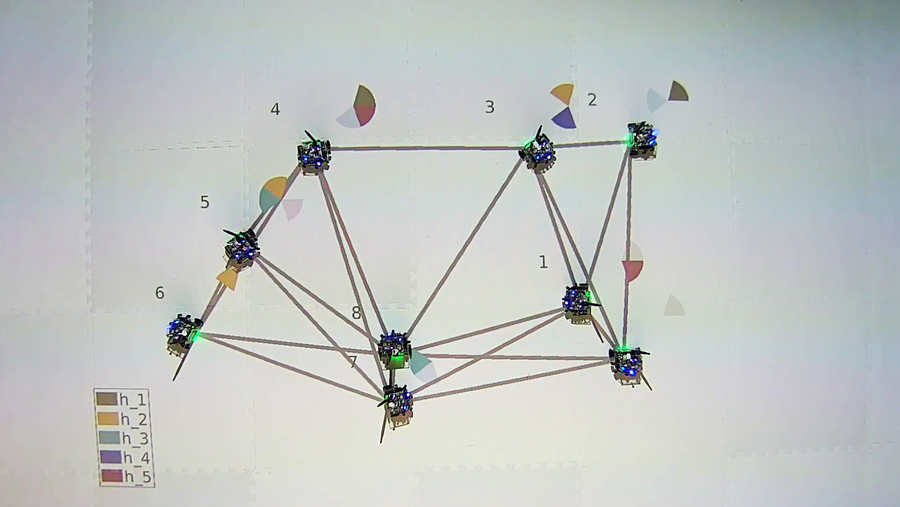}}
}
\caption{Screen-shots for the experiment executed on the Robotarium (ground robots only). The initial communication graph (a) guarantees that all robots satisfy the one-hop observability requirement, given the resources they have access to. Five possible resources are denoted by the pie chart on top of each robot. Due to the first failure, resource $4$ is removed from robot $7$. In accordance with the new configuration robot $8$ needs to access the missing resource from robot $5$ (b) and move towards it in order to establish a communication link (c). Then, because of the removal of resource $3$ from robot $2$ (d), the edges $(1,3)$ and $(3,9)$ are established (e). After all faults $\xi_1$ through $\xi_6$ have occurred, the final configuration of the robots (f) allow them to continue the monitoring task.
\label{fig:5RobotsExp}}
\end{figure*}

\begin{table}[ht]
    \centering
    \begin{tabular}{c|c|c|c  c|c|c|c}
        & Iter. & Resource & Robot & & Iter. & Resource & Robot \\
       $\xi_1$  & 150 & 4 & 7  & $\xi_4$  & 600 & 1 & 1\\
       $\xi_2$  & 300 & 1 & 9  & $\xi_5$  & 750 & 5 & 5\\
       $\xi_3$  & 450 & 3 & 2  & $\xi_6$  & 900 & 4 & 4
    \end{tabular}
    \caption{Resource failures considered in the experiment. Iteration columns represent the iteration of occurrence of failure $\xi_i$, $i=1,\dots,6$.}
    \label{tab:failures}
\end{table}

The impact of the faults on the quality of the monitoring process is shown in \autoref{fig:error_robot23}, where we show the mean estimate error along the three spatial dimensions of the quadrotors' motion, relative to robots $1$ and $8$. The red shadowed areas in \autoref{fig:error_robot23} represent the time interval during which the team went through a reconfiguration. We can observe that network reconfiguration was necessary only for failures $\xi_1$, $\xi_3$, and $\xi_5$. The removal of resource $\boldsymbol{\Gamma}_{7,4}$ and $\boldsymbol{\Gamma}_{5,5}$ (faults $\xi_1$ and $\xi_5$) affects the estimate error for robot $8$ (bottom figure), but does not affect estimate for robot $1$ (upper figure). On the other side, the removal of resource $\boldsymbol{\Gamma}_{2,3}$ (fault $\xi_3$) affects the estimate of robot $1$ which relays on resource $3$ from robot $2$. Finally, the results in \autoref{fig:lambda_23} represent the minimum eigenvalue of the observability Gramian $\lambda_{\mathbf{O}_i}$, for $i=1,\dots,N$. As expected, the observability Gramian becomes singular for both robots $1$ and $8$ under the effects of failures $(\xi_1,\xi_5)$ and $\xi_3$ respectively.
\begin{figure}[ht]
    \centering
    \includegraphics[width=\columnwidth]{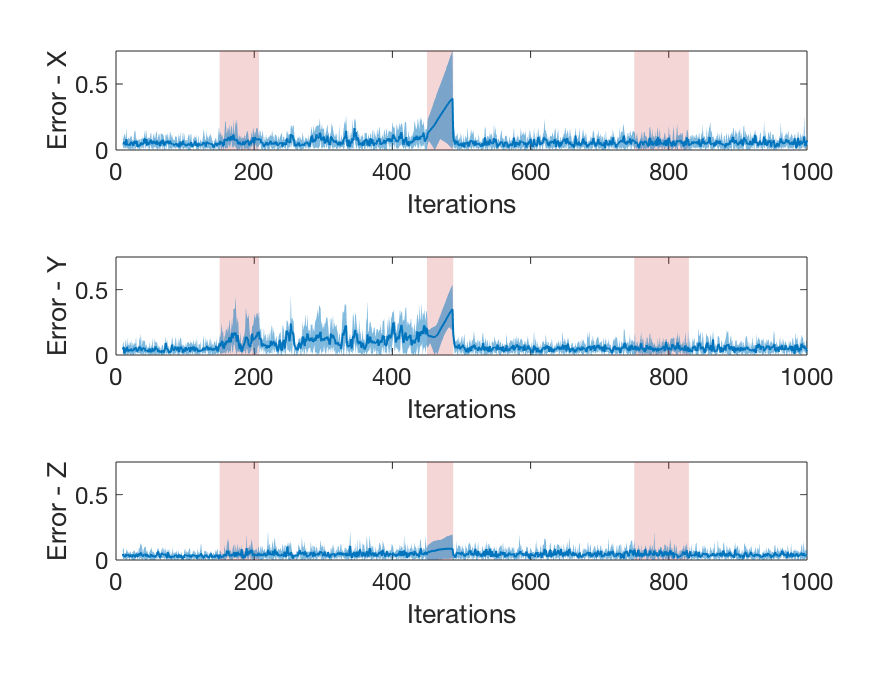} \\
    \includegraphics[width=\columnwidth]{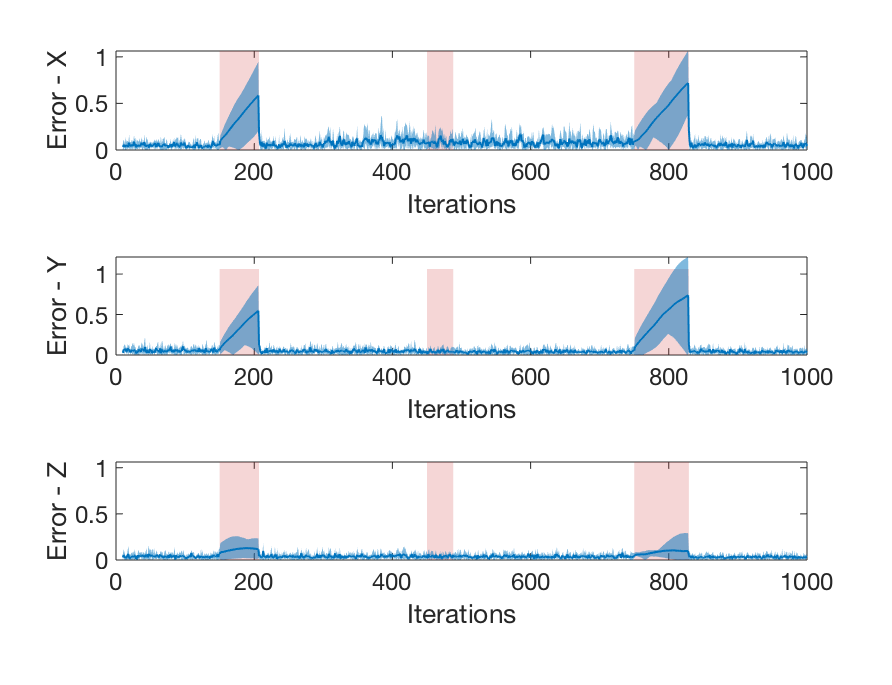} 
    \caption{Mean estimation error for robot $1$ (upper 3 panels) and $8$ (lower 3 panels) over time. The solid lines represent the mean of the estimation error absolute value for each of the three spatial dimensions of the drones' state. The shaded blue area is the interval between the minimum and maximum of each error. The red rectangular patches denote the intervals of time between the occurrence of faults and completion of corresponding reconfiguration. We note the quality of the estimate deteriorating during reconfiguration processes. Note, the removal of resource $3$ from robot $7$ (fault between iterations $150-200$) compromised the one-hop observability for robot $8$ (bottom panels) but did not affect the estimate's quality for robot $1$ (upper panels). We observe a similar effect for the fault at iteration $750$.}
    \label{fig:error_robot23}
\end{figure}

\begin{figure}[ht]
    \centering
    \includegraphics[width=\columnwidth]{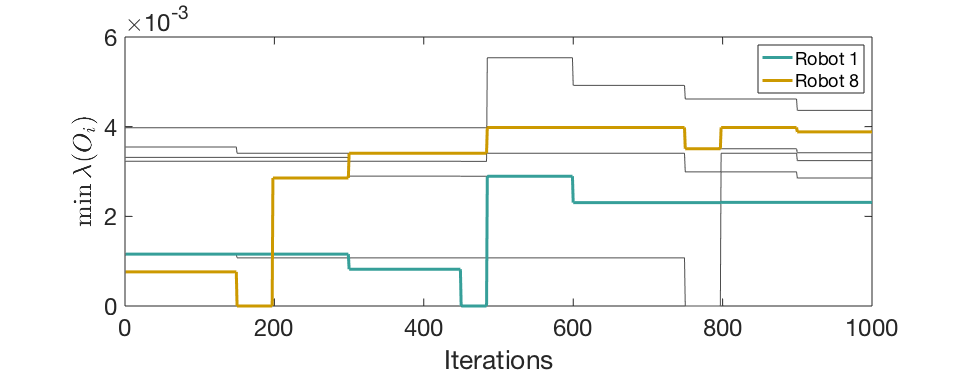}
    \caption{Minimum eigenvalue of the observability Gramian $\mathbf{O}_i$ for $i=1,\dots,9$ over time. Tick curves represent the eigenvalues for robot $1$ and $8$. The observability Gramian becomes singular for both robots $1$ and $8$ under the effects of failure $(\xi_1,\xi_5)$ and $\xi_3$ respectively.}
    \label{fig:lambda_23}
\end{figure}

\final{In order to quantify the benefits of the proposed approach, we compare it against two alternative techniques while considering different sequences of failures. With reference to \autoref{fig:fixed_IC_random_faults}, we assume $20$ sequences of pre-computed randomly selected faults for the team of ground robots considered in \autoref{fig:5RobotsExp}(a). Each sequence of faults removes one resource at a time until no resources are left. The results in \autoref{fig:fixed_IC_random_faults} show the number of faults the team of ground robots was able to tolerate before the pair $(\mathbf{A}_e, \mathbf{H}_{{\mathcal{\Bar{N}}(i)}})$ was not observable at some $i=1,\dots,9$. Along with the network reconfiguration strategy proposed in this paper, we consider a {\it random} alternative technique were edges are created between two randomly selected robots upon occurrence of a fault. A {\it greedy} comparison strategy was also considered. In this case, upon occurrence of a fault, the robot that experiences a resource failure connects to the robot having maximum value of the trace of OHO.
As we can observe from \autoref{fig:fixed_IC_random_faults}, the number of faults tolerated when considering the OHO in the reconfiguration process always equals the number of faults necessary to make the system collective not-observable (black dots). In other words, the OHO reconfiguration technique is capable of maximally exploiting the resources in the team.}

\final{Finally, we compare the same three network reconfiguration strategies for increasing values of initial resource distribution in the team. In particular, we consider mean values of initial resources per robots from $1.2$ to $4$. As we can observe in \autoref{fig:varying_IC_random_faults}, as one would expect, at high density of resources distribution our strategy performs as good as a random approach, i.e., with high resources redundancy failures can be accommodated with very little knowledge about the system. On the other side, when resources are limited, our reconfiguration strategy endows the team with the ability to always tolerate a higher number of faults. }

\begin{figure}[ht]
    \centering
    \includegraphics[width=\columnwidth]{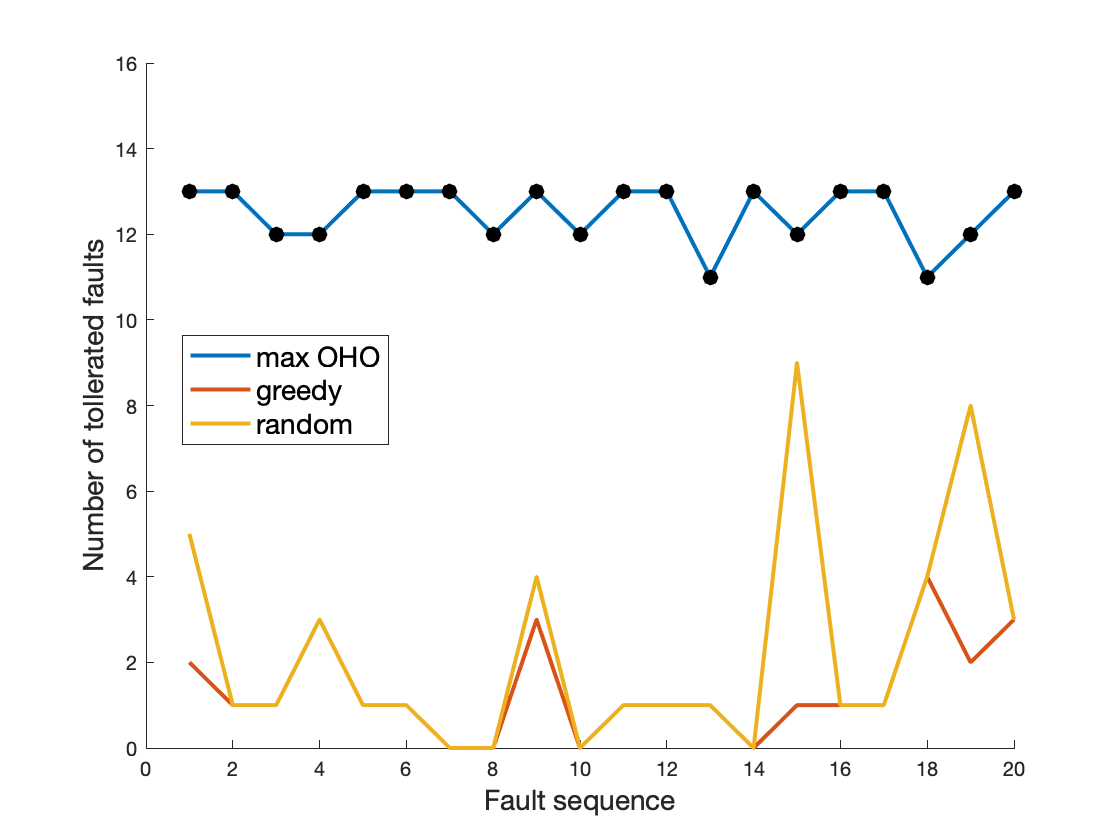}
    \caption{Number of tolerated faults for different reconfiguration strategies and 20 distinct fault sequences. All fault sequences assume the same initial resources distribution among the sensors. The blue line represents the results when implementing the OHO-based technique proposed in this paper, which is compared against random and greedy approach. The black dots in figure represent the number of faults after which the system is no longer collectively observable, i.e., even when connected by a complete graph, all sensors cannot successfully complete their task.}
    \label{fig:fixed_IC_random_faults}
\end{figure}

\begin{figure}[ht]
    \centering
    \includegraphics[width=\columnwidth]{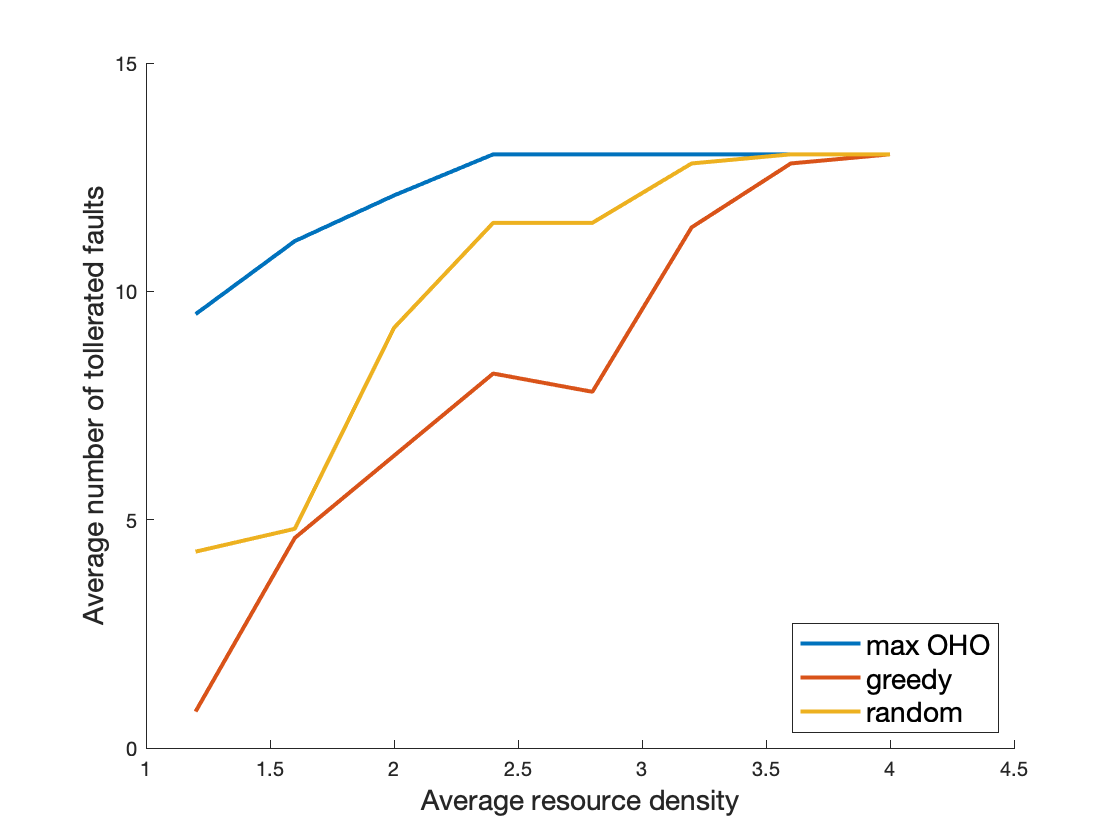}
    \caption{Number of tolerated faults for different reconfiguration strategies and randomly selected initial resource distribution. Mean resources per robot between 1.2 and 4 are considered. For each resource density, 10 distinct initial conditions are selected at random. The number of tolerated faults when implementing the OHO-based reconfiguration technique is always higher then the two other methods.}
    \label{fig:varying_IC_random_faults}
\end{figure}

\section{Conclusion}
\label{sec: conclude}
We described a framework for resilience in a networked heterogeneous multi-robot team subject to resource failures. 
Our framework builds on the novel notion of one-hop observability which allows post-failure reconfiguration of resource sharing, such that all robots can continue their tasks. The proposed framework, composed of two phases, leads to the principled  reconfiguration  of  information  flow  in  the  team  to effectively replace the lost resource on one robot with information from  another,  as  long  as  certain  conditions  are  met. The first phase selects  edges to  be  modified  in  the  system's  communication  graph  after  a resource   failure   has   occurred and the second uses  finite-time  convergence  control  barrier  functions to synthesize feedback control laws to drive each robot to spatial coordinates that  enable  the  communication  links  of  the modified  configuration. We validate the effectiveness of our framework through multi-robot experiments. In heterogeneous teams, the application of standard optimal task-assignment techniques (which include resilient reconfiguration as special case) is difficult due to the lack of computationally amenable representations that map between low-level robot resources to their effect on the overall team task. Through the notion of one-hop observability, we are able \final{to take a step towards a novel paradigm for} task-assignment in heterogeneous multi-robot systems based on the explicit mapping between the resources available to a robot and its performance (\ie the observability Gramian in the context of this paper).



\begin{thebibliography}{10}
\providecommand{\url}[1]{#1}
\csname url@samestyle\endcsname
\providecommand{\newblock}{\relax}
\providecommand{\bibinfo}[2]{#2}
\providecommand{\BIBentrySTDinterwordspacing}{\spaceskip=0pt\relax}
\providecommand{\BIBentryALTinterwordstretchfactor}{4}
\providecommand{\BIBentryALTinterwordspacing}{\spaceskip=\fontdimen2\font plus
\BIBentryALTinterwordstretchfactor\fontdimen3\font minus
  \fontdimen4\font\relax}
\providecommand{\BIBforeignlanguage}[2]{{%
\expandafter\ifx\csname l@#1\endcsname\relax
\typeout{** WARNING: IEEEtran.bst: No hyphenation pattern has been}%
\typeout{** loaded for the language `#1'. Using the pattern for}%
\typeout{** the default language instead.}%
\else
\language=\csname l@#1\endcsname
\fi
#2}}
\providecommand{\BIBdecl}{\relax}
\BIBdecl

\bibitem{Quann17}
M.~{Quann} \emph{et~al.}, ``An energy-efficient method for multi-robot
  reconnaissance in an unknown environment,'' in \emph{2017 American Control
  Conference (ACC)}, 2017, pp. 2279--2284.

\bibitem{Wang19}
Y.~{Wang} \emph{et~al.}, ``Optimal persistent monitoring using second-order
  agents with physical constraints,'' \emph{IEEE Transactions on Automatic
  Control}, vol.~64, no.~8, pp. 3239--3252, 2019.

\bibitem{Zhou19}
L.~{Zhou} and P.~{Tokekar}, ``Active target tracking with self-triggered
  communications in multi-robot teams,'' \emph{IEEE Transactions on Automation
  Science and Engineering}, vol.~16, no.~3, pp. 1085--1096, 2019.

\bibitem{stone2000multiagent}
P.~Stone and M.~Veloso, ``Multiagent systems: A survey from a machine learning
  perspective,'' \emph{Autonomous Robots}, vol.~8, no.~3, pp. 345--383, 2000.

\bibitem{Harvey16}
Y.~{Joo}, R.~{Harvey}, and Z.~{Qu}, ``Cooperative control of heterogeneous
  multi-agent systems in a sampled-data setting,'' in \emph{2016 IEEE 55th
  Conference on Decision and Control (CDC)}, 2016, pp. 2683--2688.

\bibitem{parker2016multiple}
L.~E. Parker, D.~Rus, and G.~S. Sukhatme, ``Multiple mobile robot systems,'' in
  \emph{Springer Handbook of Robotics}.\hskip 1em plus 0.5em minus 0.4em\relax
  Springer, 2016, pp. 1335--1384.

\bibitem{Twu14}
P.~{Twu}, Y.~{Mostofi}, and M.~{Egerstedt}, ``A measure of heterogeneity in
  multi-agent systems,'' in \emph{2014 American Control Conference}, 2014, pp.
  3972--3977.

\bibitem{doi:10.1177/0278364906065378}
A.~Howard, L.~E. Parker, and G.~S. Sukhatme, ``Experiments with a large
  heterogeneous mobile robot team: Exploration, \ mapping, deployment and
  detection,'' \emph{The International Journal of Robotics Research}, vol.~25,
  no. 5-6, pp. 431--447, 2006.

\bibitem{parker2003effect}
L.~E. Parker \emph{et~al.}, ``The effect of heterogeneity in teams of 100+
  mobile robots,'' \emph{Multi-Robot Systems}, vol.~2, pp. 205--215, 2003.

\bibitem{Prorok16}
A.~{Prorok}, M.~A. {Hsieh}, and V.~{Kumar}, ``Formalizing the impact of
  diversity on performance in a heterogeneous swarm of robots,'' in \emph{2016
  IEEE International Conference on Robotics and Automation (ICRA)}, 2016, pp.
  5364--5371.

\bibitem{coradeschi2013short}
S.~Coradeschi, A.~Loutfi, and B.~Wrede, ``A short review of symbol grounding in
  robotic and intelligent systems,'' \emph{KI-K{\"u}nstliche Intelligenz},
  vol.~27, no.~2, pp. 129--136, 2013.

\bibitem{ramachandran2019resilience}
R.~K. {Ramachandran}, J.~A. {Preiss}, and G.~S. {Sukhatme}, ``Resilience by
  reconfiguration: Exploiting heterogeneity in robot teams,'' in \emph{2019
  IEEE/RSJ International Conference on Intelligent Robots and Systems (IROS)},
  Nov 2019, pp. 6518--6525.

\bibitem{pierpaoli2019sequential}
P.~Pierpaoli \emph{et~al.}, ``A sequential composition framework for
  coordinating multi-robot behaviors,'' \emph{arXiv preprint arXiv:1907.07718},
  2019.

\bibitem{walker1992biodiversity}
B.~H. Walker, ``Biodiversity and ecological redundancy,'' \emph{Conservation
  biology}, vol.~6, no.~1, pp. 18--23, 1992.

\bibitem{beshers2001models}
S.~N. Beshers and J.~H. Fewell, ``Models of division of labor in social
  insects,'' \emph{Annual review of entomology}, vol.~46, no.~1, pp. 413--440,
  2001.

\bibitem{balch2000hierarchic}
T.~Balch, ``Hierarchic social entropy: An information theoretic measure of
  robot group diversity,'' \emph{Autonomous robots}, vol.~8, no.~3, pp.
  209--238, 2000.

\bibitem{abbas2014characterizing}
W.~Abbas and M.~Egerstedt, ``Characterizing heterogeneity in cooperative
  networks from a resource distribution view-point,'' \emph{Communications in
  Information and Systems}, no.~1, pp. 1036--1041, 11 2014.

\bibitem{Parker06}
L.~E. {Parker} and F.~{Tang}, ``Building multirobot coalitions through
  automated task solution synthesis,'' \emph{Proceedings of the IEEE}, vol.~94,
  no.~7, pp. 1289--1305, 2006.

\bibitem{Vig06}
L.~{Vig} and J.~A. {Adams}, ``Multi-robot coalition formation,'' \emph{IEEE
  Transactions on Robotics}, vol.~22, no.~4, pp. 637--649, 2006.

\bibitem{pinciroli2016buzz}
C.~Pinciroli and G.~Beltrame, ``Buzz: An extensible programming language for
  heterogeneous swarm robotics,'' in \emph{2016 IEEE/RSJ International
  Conference on Intelligent Robots and Systems (IROS)}.\hskip 1em plus 0.5em
  minus 0.4em\relax IEEE, 2016, pp. 3794--3800.

\bibitem{prorok2016fast}
A.~Prorok, M.~A. Hsieh, and V.~Kumar, ``Fast redistribution of a swarm of
  heterogeneous robots,'' in \emph{Proceedings of the 9th EAI International
  Conference on Bio-inspired Information and Communications Technologies
  (formerly BIONETICS)}, 2016, pp. 249--255.

\bibitem{prorok2016adaptive}
A.~Prorok, M.~A. Hsieh, and V.~Kumar, ``Adaptive distribution of a swarm of
  heterogeneous robots,'' \emph{Acta Polytechnica}, vol.~56, no.~1, pp. 67--75,
  2016.

\bibitem{ravichandar2020strata}
H.~Ravichandar, K.~Shaw, and S.~Chernova, ``Strata: unified framework for task
  assignments in large teams of heterogeneous agents,'' \emph{Autonomous Agents
  and Multi-Agent Systems}, vol.~34, no.~2, 2020.

\bibitem{okamoto2008impact}
S.~Okamoto, P.~Scerri, and K.~P. Sycara, ``The impact of vertical
  specialization on hierarchical multi-agent systems.'' in \emph{AAAI}, 2008,
  pp. 138--143.

\bibitem{li2003diversity}
L.~Li, A.~Martinoli, and Y.~Abu-Mostafa, ``Diversity and specialization in
  collaborative swarm systems,'' in \emph{Proc. of the Second Int. Workshop on
  Mathematics and Algorithms of Social Insects}, no. CONF, 2003, pp. 91--98.

\bibitem{notomista2019optimal}
G.~Notomista \emph{et~al.}, ``An optimal task allocation strategy for
  heterogeneous multi-robot systems,'' in \emph{2019 18th European Control
  Conference (ECC)}.\hskip 1em plus 0.5em minus 0.4em\relax IEEE, 2019, pp.
  2071--2076.

\bibitem{Gerkey2004}
\BIBentryALTinterwordspacing
B.~P. Gerkey and M.~J. Matarić, ``A formal analysis and taxonomy of task
  allocation in multi-robot systems,'' \emph{The International Journal of
  Robotics Research}, vol.~23, no.~9, pp. 939--954, 2004. [Online]. Available:
  \url{https://doi.org/10.1177/0278364904045564}
\BIBentrySTDinterwordspacing

\bibitem{Korsah2013}
\BIBentryALTinterwordspacing
G.~A. Korsah, A.~Stentz, and M.~B. Dias, ``A comprehensive taxonomy for
  multi-robot task allocation,'' \emph{The International Journal of Robotics
  Research}, vol.~32, no.~12, pp. 1495--1512, 2013. [Online]. Available:
  \url{https://doi.org/10.1177/0278364913496484}
\BIBentrySTDinterwordspacing

\bibitem{zhang2012robustness}
H.~Zhang and S.~Sundaram, ``Robustness of information diffusion algorithms to
  locally bounded adversaries,'' in \emph{2012 American Control Conference
  (ACC)}.\hskip 1em plus 0.5em minus 0.4em\relax IEEE, 2012, pp. 5855--5861.

\bibitem{Zhang2015}
H.~{Zhang}, E.~{Fata}, and S.~{Sundaram}, ``A notion of robustness in complex
  networks,'' \emph{IEEE Transactions on Control of Network Systems}, vol.~2,
  no.~3, pp. 310--320, Sep. 2015.

\bibitem{guerrero2017formations}
L.~Guerrero-Bonilla, A.~Prorok, and V.~Kumar, ``Formations for resilient robot
  teams,'' \emph{IEEE Robotics and Automation Letters}, vol.~2, no.~2, pp.
  841--848, 2017.

\bibitem{LeBlanc2013}
H.~J. {LeBlanc} \emph{et~al.}, ``Resilient asymptotic consensus in robust
  networks,'' \emph{IEEE Journal on Selected Areas in Communications}, vol.~31,
  no.~4, pp. 766--781, April 2013.

\bibitem{luo2019minimum}
W.~Luo and K.~Sycara, ``Minimum k-connectivity maintenance for robust
  multi-robot systems,'' in \emph{2019 IEEE/RSJ International Conference on
  Intelligent Robots and Systems (IROS)}.\hskip 1em plus 0.5em minus
  0.4em\relax IEEE, 2019, pp. 7370--7377.

\bibitem{schlotfeldt2018resilient}
B.~Schlotfeldt \emph{et~al.}, ``Resilient active information gathering with
  mobile robots,'' in \emph{2018 IEEE/RSJ International Conference on
  Intelligent Robots and Systems (IROS)}.\hskip 1em plus 0.5em minus
  0.4em\relax IEEE, 2018, pp. 4309--4316.

\bibitem{tzoumas2018resilient}
V.~Tzoumas, ``Resilient submodular maximization for control and sensing,''
  Ph.D. dissertation, University of Pennsylvania, 2018.

\bibitem{Wehbe18}
R.~{Wehbe} and R.~K. {Williams}, ``Probabilistic graph security for networked
  multi-robot systems,'' in \emph{2018 IEEE International Conference on
  Robotics and Automation (ICRA)}, 2018, pp. 7646--7653.

\bibitem{Wehbe19}
R.~{Wehbe} and R.~K. {Williams}, ``Approximate probabilistic security for
  networked multi-robot systems,'' in \emph{2019 International Conference on
  Robotics and Automation (ICRA)}, 2019, pp. 1997--2003.

\bibitem{ji2007distributed}
M.~Ji and M.~Egerstedt, ``Distributed coordination control of multiagent
  systems while preserving connectedness,'' \emph{IEEE Transactions on
  Robotics}, vol.~23, no.~4, pp. 693--703, 2007.

\bibitem{sabattini2013distributed}
L.~Sabattini \emph{et~al.}, ``Distributed control of multirobot systems with
  global connectivity maintenance,'' \emph{IEEE Transactions on Robotics},
  vol.~29, no.~5, pp. 1326--1332, 2013.

\bibitem{igarashi2009passivity}
Y.~Igarashi \emph{et~al.}, ``Passivity-based attitude synchronization in {$ SE
  (3) $},'' \emph{IEEE Transactions on Control Systems Technology}, vol.~17,
  no.~5, pp. 1119--1134, 2009.

\bibitem{wang2016multi}
L.~Wang, A.~D. Ames, and M.~Egerstedt, ``Multi-objective compositions for
  collision-free connectivity maintenance in teams of mobile robots,'' in
  \emph{2016 IEEE 55th Conference on Decision and Control (CDC)}.\hskip 1em
  plus 0.5em minus 0.4em\relax IEEE, 2016, pp. 2659--2664.

\bibitem{cortes2017coordinated}
J.~Cort{\'e}s and M.~Egerstedt, ``Coordinated control of multi-robot systems: A
  survey,'' \emph{SICE Journal of Control, Measurement, and System
  Integration}, vol.~10, no.~6, pp. 495--503, 2017.

\bibitem{li2018formally}
A.~Li \emph{et~al.}, ``Formally correct composition of coordinated behaviors
  using control barrier certificates,'' in \emph{2018 IEEE/RSJ International
  Conference on Intelligent Robots and Systems (IROS)}.\hskip 1em plus 0.5em
  minus 0.4em\relax IEEE, 2018, pp. 3723--3729.

\bibitem{Horn:1985:MA:5509}
R.~A. Horn and C.~R. Johnson, Eds., \emph{Matrix Analysis}.\hskip 1em plus
  0.5em minus 0.4em\relax New York, NY, USA: Cambridge University Press, 1986.

\bibitem{Mesbahi_Magnus}
\BIBentryALTinterwordspacing
M.~Mesbahi and M.~Egerstedt, \emph{Graph Theoretic Methods in Multiagent
  Networks}.\hskip 1em plus 0.5em minus 0.4em\relax Princeton University Press,
  2010. [Online]. Available: \url{http://www.jstor.org/stable/j.ctt1287k9b}
\BIBentrySTDinterwordspacing

\bibitem{GodsilRoyle2001}
C.~Godsil and G.~Royle, \emph{Algebraic Graph Theory}, ser. Graduate Texts in
  Mathematics.\hskip 1em plus 0.5em minus 0.4em\relax New York:
  Springer-Verlag, 2001, vol. 207.

\bibitem{sundin2017connectedness}
M.~Sundin \emph{et~al.}, ``A connectedness constraint for learning sparse
  graphs,'' in \emph{2017 25th European Signal Processing Conference}.\hskip
  1em plus 0.5em minus 0.4em\relax IEEE, 2017, pp. 151--155.

\bibitem{abbas2016deploying}
W.~Abbas \emph{et~al.}, ``Deploying robots with two sensors in k1, 6-free
  graphs,'' \emph{Journal of Graph Theory}, vol.~82, no.~3, pp. 236--252, 2016.

\bibitem{pierpaoli2018fault}
P.~Pierpaoli, D.~Sauter, and M.~Egerstedt, ``Fault tolerant control for
  networked mobile robots,'' in \emph{2018 IEEE Conference on Control
  Technology and Applications (CCTA)}.\hskip 1em plus 0.5em minus 0.4em\relax
  IEEE, 2018, pp. 374--379.

\bibitem{Ghasemi2017}
\BIBentryALTinterwordspacing
A.~Ghasemi, J.~Askari~Marnani, and M.~B. Menhaj, ``Fault detection and
  isolation of multi-agent systems via complex {L}aplacian,'' \emph{AUT Journal
  of Modeling and Simulation}, vol.~49, no.~1, pp. 95--102, 2017. [Online].
  Available: \url{https://miscj.aut.ac.ir/article_830.html}
\BIBentrySTDinterwordspacing

\bibitem{olfati2007dkf}
Olfati-Saber, ``Distributed {K}alman filtering for sensor networks,'' in
  \emph{Decision and Control, 2007 46th IEEE Conference on}, 2007.

\bibitem{FB-LNS}
\BIBentryALTinterwordspacing
F.~Bullo, \emph{Lectures on Network Systems}, 1st~ed.\hskip 1em plus 0.5em
  minus 0.4em\relax Kindle Direct Publishing, 2019, with contributions by J.
  Cortes, F. Dorfler, and S. Martinez. [Online]. Available:
  \url{http://motion.me.ucsb.edu/book-lns}
\BIBentrySTDinterwordspacing

\bibitem{Chen1998}
C.-T. Chen, \emph{Linear System Theory and Design}, 3rd~ed.\hskip 1em plus
  0.5em minus 0.4em\relax USA: Oxford University Press, Inc., 1998.

\bibitem{SUN2017128}
\BIBentryALTinterwordspacing
C.~Sun and R.~Dai, ``Rank-constrained optimization and its applications,''
  \emph{Automatica}, vol.~82, pp. 128 -- 136, 2017. [Online]. Available:
  \url{http://www.sciencedirect.com/science/article/pii/S0005109817302339}
\BIBentrySTDinterwordspacing

\bibitem{hinson2014observability}
B.~T. Hinson, ``Observability-based guidance and sensor placement,'' Ph.D.
  dissertation, University of Washington, 2014.

\bibitem{ilkturk2015observability}
U.~Ilkturk, ``Observability methods in sensor scheduling,'' Ph.D. dissertation,
  Arizona State University, 2015.

\bibitem{Dilip2019}
A.~S.~A. {Dilip}, ``The controllability {G}ramian, the {H}adamard product, and
  the optimal actuator/leader and sensor selection problem,'' \emph{IEEE
  Control Systems Letters}, vol.~3, no.~4, pp. 883--888, 2019.

\bibitem{rafiee2010optimal}
M.~Rafiee and A.~M. Bayen, ``Optimal network topology design in multi-agent
  systems for efficient average consensus,'' in \emph{49th IEEE Conference on
  Decision and Control (CDC)}.\hskip 1em plus 0.5em minus 0.4em\relax IEEE,
  2010, pp. 3877--3883.

\bibitem{Grob2011}
D.~{Grob} and O.~{Stursberg}, ``Optimized distributed control and network
  topology design for interconnected systems,'' in \emph{2011 50th IEEE
  Conference on Decision and Control and European Control Conference}, 2011,
  pp. 8112--8117.

\bibitem{boyd1994linear}
S.~Boyd \emph{et~al.}, \emph{Linear matrix inequalities in system and control
  theory}.\hskip 1em plus 0.5em minus 0.4em\relax Siam, 1994, vol.~15.

\bibitem{Bernhard_book_12}
B.~Korte and J.~Vygen, \emph{Combinatorial Optimization: Theory and
  Algorithms}, 5th~ed.\hskip 1em plus 0.5em minus 0.4em\relax Springer
  Publishing Company, Incorporated, 2012.

\bibitem{bhat2000finite}
S.~P. Bhat and D.~S. Bernstein, ``Finite-time stability of continuous
  autonomous systems,'' \emph{SIAM Journal on Control and Optimization},
  vol.~38, no.~3, pp. 751--766, 2000.

\bibitem{ramirez2010distributed}
J.~L. Ramirez-Riberos \emph{et~al.}, ``Distributed control of spacecraft
  formations via cyclic pursuit: Theory and experiments,'' \emph{Journal of
  Guidance, Control, and Dynamics}, vol.~33, no.~5, pp. 1655--1669, 2010.

\bibitem{wilson2020robotarium}
S.~Wilson \emph{et~al.}, ``The robotarium: Globally impactful opportunities,
  challenges, and lessons learned in remote-access, distributed control of
  multirobot systems,'' \emph{IEEE Control Systems Magazine}, vol.~40, no.~1,
  pp. 26--44, 2020.

\bibitem{preiss2017crazyswarm}
J.~A. Preiss \emph{et~al.}, ``Crazyswarm: A large nano-quadcopter swarm,'' in
  \emph{2017 IEEE International Conference on Robotics and Automation
  (ICRA)}.\hskip 1em plus 0.5em minus 0.4em\relax IEEE, 2017, pp. 3299--3304.

\end{thebibliography}

\end{document}